\title{Outsourcing Training without Uploading Data \\
via Efficient Collaborative Open-Source Sampling}
\definecolor{Gray}{gray}{0.9}
\newcolumntype{g}{>{\columncolor{Gray}}c}
\newtheorem{theorem}{Theorem}[section]
\newtheorem{lemma}{Lemma}[section]
\theoremstyle{definition}
\newtheorem{definition}{Definition}[section]
\theoremstyle{remark}
\newcommand{\cmark}{\ding{51}}%
\newcommand{\xmark}{\ding{55}}%
\newcommand{\xsm}[1]{\scalebox{.8}{#1}}
\definecolor{Gray}{gray}{0.93}
\newcolumntype{a}{>{\columncolor{Gray}}c}
\newcommand{\colorline}[1]{
\hspace{-0.01\linewidth}\colorbox{gray!20}{\makebox[0.99\linewidth][l]{#1}}
 }
\newcommand{\rev}[1]{#1}  %
\definecolor{lightred}{rgb}{0.847, 0.435, 0.435}
\newcommand{\cC}{{\mathcal{C}}}
\newcommand{\cD}{{\mathcal{D}}}
\newcommand{\cH}{{\mathcal{H}}}
\newcommand{\cM}{{\mathcal{M}}}
\newcommand{\cN}{{\mathcal{N}}}
\newcommand{\cO}{{\mathcal{O}}}
\newcommand{\cS}{{\mathcal{S}}}
\newcommand{\cX}{{\mathcal{X}}}
\newcommand{\cY}{{\mathcal{Y}}}
\newcommand{\NN}{\mathbb{N}}
\DeclareMathOperator*{\argmax}{arg\,max}
\newcommand{\bc}{\begin{center}}
\newcommand{\ec}{\end{center}}
\newcommand{\bdm}{\begin{displaymath}}
\newcommand{\edm}{\end{displaymath}}
\newcommand{\beq}{\begin{equation}}
\newcommand{\eeq}{\end{equation}}
\newcommand{\bfl}{\begin{flushleft}}
\newcommand{\efl}{\end{flushleft}}
\newcommand{\bt}{\begin{tabbing}}
\newcommand{\et}{\end{tabbing}}
\newcommand{\beqn}{\begin{align}}
\newcommand{\eeqn}{\end{align}}
\newcommand{\beqs}{\begin{align*}} %
\newcommand{\eeqs}{\end{align*}}  %
\newcommand{\norm}[1]{\left\|#1\right\|}
\newcommand{\etal}{\emph{et al.}}
\newcommand{\Ebb}{\mathbb{E}}
\author{%
	Junyuan Hong\thanks{Work done during internship at Sony AI. Corresponding to: Lingjuan Lyu.} \\
	Michigan State University \\ %
	\texttt{hongju12@msu.edu} \\
	\And
	Lingjuan Lyu \\
	Sony AI \\
	\texttt{lingjuan.lv@sony.com} \\
	\AND
	Jiayu Zhou \\
	Michigan State University \\ %
	\texttt{jiayuz@msu.edu} \\
	\And
	Michael Spranger \\
	Sony AI \\
	\texttt{michael.spranger@sony.com}
}
\begin{document}

\maketitle

\begin{abstract}
As deep learning blooms with growing demand for computation and data resources, outsourcing model training to a powerful cloud server becomes an attractive alternative to training at a low-power and cost-effective end device. 
Traditional outsourcing requires uploading device data to the cloud server, which can be infeasible in many real-world applications due to the often sensitive nature of the collected data and the limited communication bandwidth. 
To tackle these challenges, we propose to leverage widely available \emph{open-source data}, which is a massive dataset collected from public and heterogeneous sources (e.g., Internet images). 
We develop a novel strategy called Efficient Collaborative Open-source Sampling (ECOS) to construct a proximal proxy dataset from open-source data for cloud training, in lieu of client data. 
ECOS probes open-source data on the cloud server to sense the distribution of client data via a communication- and computation-efficient sampling process, which only communicates a few compressed public features and client scalar responses. 
Extensive empirical studies show that the proposed ECOS improves the quality of automated client labeling, model compression, and label outsourcing when applied in various learning scenarios. 
\end{abstract}

\section{Introduction}
\label{sec:intro}

Nowadays, powerful machine learning services are essential in many devices that supports our daily routines. Delivering such services is typically done through client devices that are power-efficient and thus very restricted in computing capacity. 
The client devices can collect data through built-in sensors and make predictions by machine learning models. However, their stringent computing power often makes the local training prohibitive, especially for high-capacity deep models. 
One widely adopted solution is to outsource the cumbersome training to cloud servers equipped with massive computational power, using machine-learning-as-a-service (MLaaS). Amazon Sagemaker~\cite{liberty2020elastic}, Google ML Engine~\cite{bisong2019google}, and Microsoft Azure ML Studio~\cite{barnes2015azure} are among the most successful industrial adoptions, where users upload training data to designated cloud storage, and the optimized machine learning engines then handle the training.  
One major challenge of the outsourcing solution in many applications is that the local data collected are sensitive and protected by regulations, therefore prohibiting data sharing. 
Notable examples include General Data Protection Regulation (GDPR)~\cite{gdpr} and Health Insurance Portability and Accountability Act (HIPPA)~\cite{act1996health}.

On the other hand, recent years witnessed a surging amount of general-purpose and massive datasets authorized for public use, such as ImageNet~\cite{deng2009imagenet}, CelebA~\cite{liu2015faceattributes}, and MIMIC~\cite{johnson2016mimic}.
Moreover, many task-specific datasets used by local clients can be well considered as biased subsets of these large public datasets~\cite{peng2019moment,li2020fedbn}. 
Therefore, the availability of these datasets allows us to use them to model confidential local data, facilitating training outsourcing without directly sharing the local data. 
One approach is to use the private client dataset to craft pseudo labels for a public dataset in a confidential manner~\cite{zhu2020privateknn,papernot2018scalable}, assuming that the public and local data are identically-and-independently-distributed (\emph{iid}). 
In addition, Alon~\etal ~showed that an \emph{iid} public data can strongly supplement client learning, which greatly reduces the private sample complexity~\cite{alon2019limits}.
However, the \emph{iid} assumption can often be too strong for general-purpose \emph{open-source} datasets, since they are usually collected from heterogeneous sources with distributional biases from varying environments. For example, a search of `digits' online yields digits images from handwriting scans, photos, to artwork of digits. 

\begin{figure*}
    \centering
    \includegraphics[width=\textwidth]{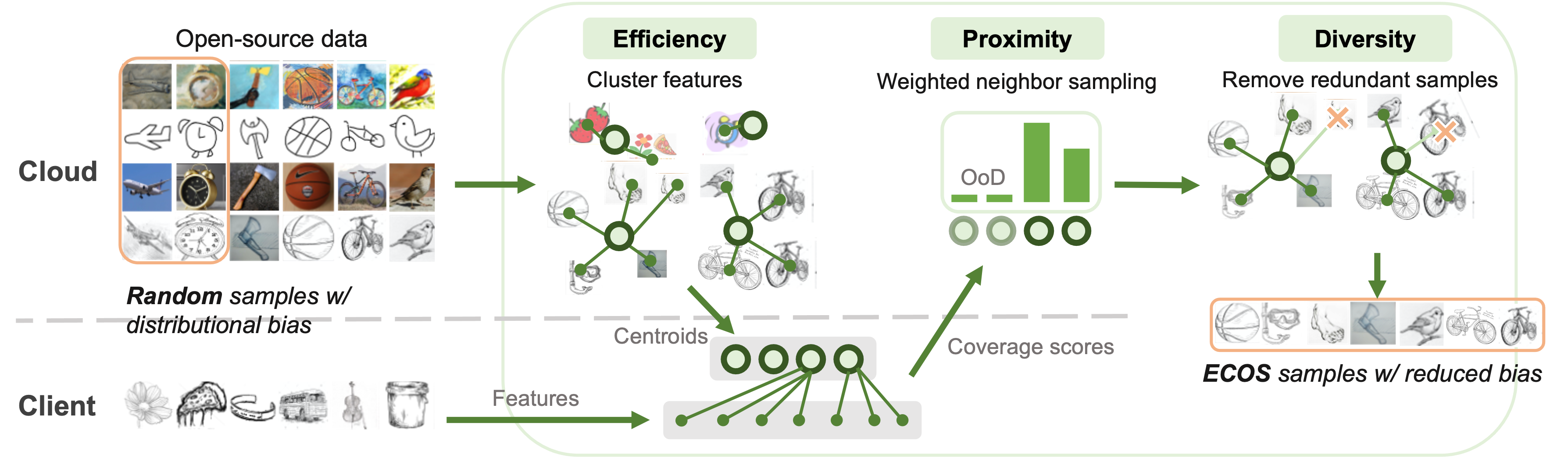}
    \caption{Illustration of the proposed ECOS framework.
    Instead of uploading local data for cloud training, ECOS downloads the centroids of clustered open-source features to \emph{efficiently} sense the client distribution, where the client counts the local neighbor samples of the centroids as the coverage score.
    Based on the the scores of centroids, the server adaptively samples \emph{proximal} and \emph{diverse} data for training a transferable model on the cloud.
    }
    \label{fig:priv_sample_main_idea}
    \vspace{-0.2in}
\end{figure*}

In this paper, we relax the \emph{iid} assumption in training outsourcing and instead consider the availability of an open-source dataset.
We first study the gap between the \emph{iid} data and the heterogeneous open-source data in training outsourcing, and show the low sample efficiency of open-source data.
We show that in order to effectively train a model from open-source data that is transferable to the client data, %
the open-source data needs to communicate more samples %
than those of \emph{iid} data. 
The main reason behind such low sample efficiency is that we accidentally included out-of-distribution (OoD) samples, which poison the training and significantly degrade accuracy at the target (client) data distribution~\cite{biggio2012poisoning}.
We propose a novel framework called Efficient Collaborative Open-source Sampling (ECOS) to tackle this challenge, which filters the open-source dataset through an efficient collaboration between the client and server and does not require client data to be shared. During the collaboration, the server sends compressed representative features (centroids) of the open-source dataset to the client.
The client then identifies and excludes OoD centroids and returns their privately computed categorical scores to the server.
The server then adaptively and diversely decompresses the neighbors of the selected centroids. The main idea is illustrated in \cref{fig:priv_sample_main_idea}.

Our major contributions are summarized as follows: \\
    $\bullet$ \textit{New problem and insight}: Motivated by the strong demands for efficient and confidential outsourcing, using public data in place of the client data is an attractive solution. However, the impact of heterogeneous sources of the public data, namely open-source data, is rarely studied in existing works. Our empirical study shows the potential challenges due to such heterogeneity.\\
    $\bullet$ \textit{New sampling paradigm}: We propose a new unified sampling paradigm, where the server only sends very few query data to the client and requests very few responses that efficiently and privately guide the cloud for various learning settings on open-source data.
    To our best knowledge, our method enables efficient cloud outsourcing under the most practical assumption of open-source public data, and does not require accessing raw client data or executing cumbersome local training. \\
    $\bullet$ \textit{Compelling results}: In all three practical learning scenarios, our method improves the model accuracy with pseudo, manual or pre-trained supervisions. Besides, our method shows competitive efficiency in terms of both communication and computation.
\section{Related Work} %
\label{sec:related}
\vspace{-0.1in}
There are a series of efforts studying how to leverage the data and computation resources on the cloud to assist client model training, especially when client data cannot be shared \cite{yao2021edgecloud,wang2019edge}. We categorize them as follows:
\textit{1)~Feature sharing}: Methods like group knowledge transfer~\cite{he2020group}, split learning~\cite{vepakomma2018split} and domain adaptation~\cite{dong2020what,Dong2021transfer} transfer edge knowledge by communicating features extracted by networks. To provide a theoretical guarantee of privacy protection, \cite{osia2020deep} proposed an advanced information removal to disentangle sensitive attributes from shared features.  %
In the notion of rigorous privacy definition, Liu~\etal\ leveraged public data to assist private information release~\cite{liu2021leveraging}.
Earlier, data encryption was used for outsourcing, which however is too computation-intensive for a client and less applicable for large-scale data and deep networks~\cite{chen2014highly,lei2013outsourcing}.
\rev{Federated Learning (FL)~\cite{mcmahan2017communication} considers the same constraint on data sharing but allocates the burdens of training~\cite{hong2022efficient} and communication~\cite{zhu2022resilient} to clients and opens up a series of challenges on privacy~\cite{chen2022practical}, security~\cite{lyu2020privacy,chen2022calfat} and knowledge transfer~\cite{hong2021federated}.}
\textit{2)~Private labeling:} 
\rev{PATE and its variants were proposed to generate client-approximated labels for unlabeled public data, on which a model can be trained~\cite{papernot2016semi,papernot2018scalable}.
Without training multiple models by clients, Private kNN was a more efficient alternative which explored the private neighborhood of public images for labeling~\cite{zhu2020privateknn}.}
These approaches are based on a strong assumption of the availability of public data that is \emph{iid} as the local data.
This paper considers a more practical yet challenging setting where public data are from multiple agnostic sources with heterogeneous features.

Sampling from public data has been explored in central settings.
For example, Xu~\etal~\cite{xu2019positiveunlabeled} used a few target-domain samples as a seed dataset to filter the open-domain datasets by positive-unlabeled learning~\cite{loghmani2020positiveunlabeled}.
\rev{Yan~\etal~\cite{yan2020neural} used a model to find the proxy datasets from multiple candidate datasets.}
In self-supervised contrastive learning, model-aware $K$-center (MAK) used a model pre-trained on the seed dataset to find desired-class samples from open-world dataset~\cite{jiang2021improving}.
Though these methods provided effective sampling, they are less applicable when the seed dataset is placed at the low-energy edge, because the private seed data at the edge cannot be shared with the cloud for filtering and the edge device is incapable of computation-intensive training.
To address these challenges, we develop a new sampling strategy requiring only light-weight computation at the edge.
\section{Outsourcing Model Training With Open-Source Data}
\label{sec:method}
\vspace{-0.1in}

\subsection{Problem Setting and Challenges}

\begin{table}[b]
    \caption{Test accuracy (\%) with different client domains (columns). Cloud data are identically distributed as the client data (\textit{ID}) or including more data from 5 distinct domains (\textit{ID+OoD}) without overlapped samples. We first label a number of randomly selected cloud examples (i.e., sampling budget) privately by client data~\cite{zhu2020privateknn}, and then train a classifier to recognize digit images. The privacy cost $\epsilon$ is accounted for in the notion of differential privacy. Larger budgets imply more privacy and communication costs.
    More results on different settings are enclosed in \cref{sec:app:smp_eff}.
    }
    \label{tbl:ood_ablation}
    \centering
    \setlength\tabcolsep{2 pt}
    \scriptsize
    \begin{tabular}{cc*{10}{c}ac}
        \toprule
        \rowcolor{white}
        Cloud & Sampling & \multicolumn{2}{c}{MNIST} & \multicolumn{2}{c}{SVHN} & \multicolumn{2}{c}{USPS} & \multicolumn{2}{c}{SynthDigits} & \multicolumn{2}{c}{MNIST-M} & \multicolumn{2}{c}{Average} \\
        \rowcolor{white}
        Data & Budget & Acc (\%) $\uparrow$ & $\epsilon \downarrow$  & Acc (\%) $\uparrow$ & $\epsilon \downarrow$  & Acc (\%) $\uparrow$ & $\epsilon \downarrow$  & Acc (\%) $\uparrow$ & $\epsilon \downarrow$  & Acc (\%) $\uparrow$ & $\epsilon \downarrow$ & Acc (\%) $\uparrow$ & $\epsilon \downarrow$ \\
        \midrule
        ID & 1000 & $\mathbf{84.3}$\xsm{$\pm 2.4$} &  4.48 & $\mathbf{51.6}$\xsm{$\pm 1.4$} &  4.08 & $\mathbf{87.1}$\xsm{$\pm 0.5$} &  4.51 & $\mathbf{73.2}$\xsm{$\pm 1.5$} &  4.57 & $\mathbf{55.5}$\xsm{$\pm 1.0$} &  4.46 & $\mathbf{70.4}$ & 4.42 \\
        \midrule
        \multirow{4}{*}{ID+OoD} & 1000 & $78.0$\xsm{$\pm 3.5$} &  4.30 & $40.6$\xsm{$\pm 1.6$} &  3.75 & $82.2$\xsm{$\pm 2.7$} &  4.32 & $62.1$\xsm{$\pm 1.6$} &  4.41 & $49.1$\xsm{$\pm 1.0$} &  4.27 & $62.4$ & 4.21 \\
        & 8000 & $82.2$\xsm{$\pm 4.1$} &  5.89 & $47.9$\xsm{$\pm 1.8$} &  5.89 & $85.4$\xsm{$\pm 0.5$} &  5.89 & $64.4$\xsm{$\pm 3.6$} &  5.89 & $53.3$\xsm{$\pm 2.2$} &  5.89 & $66.6$ & 5.89 \\
        & 16000 & $82.6$\xsm{$\pm 1.4$} &  7.17 & $48.5$\xsm{$\pm 1.7$} &  7.17 & $86.7$\xsm{$\pm 1.9$} &  7.17 & $67.5$\xsm{$\pm 2.3$} &  7.17 & $52.0$\xsm{$\pm 3.0$} & 7.17 & $67.4$ & 7.17 \\
        & 32000 & $84.1$\xsm{$\pm 1.6$} &  9.32 & $49.4$\xsm{$\pm 0.2$} &  9.32 & $86.8$\xsm{$\pm 2.0$} &  9.32 & $68.5$\xsm{$\pm 0.1$} &  9.32 & $53.0$\xsm{$\pm 2.7$} &  9.32 & $68.4$ & 9.32 \\
        \bottomrule
    \end{tabular}
\end{table}

Motivated in \cref{sec:intro}, we aim to outsource the training process from computation-constrained devices to the powerful cloud server, where a proxy public dataset without privacy concerns is used in place of the client dataset for cloud training.
One solution is (private) client labeling by k-nearest-neighbors (kNN) \cite{zhu2020privateknn}, where the client and cloud server communicate the pseudo-label of a public dataset privately and the server trains a classifier by the labeled and unlabeled samples in a semi-supervised manner.
The success of this strategy depends on the key assumptions that public data in the cloud and private data in the client are \emph{iid}, which are rather strong in practice and thus prevent it from many real-world applications. 
In this work, we make a more \emph{realistic} assumption that the public datasets are as accessible as \emph{open-source} data.
An open-source dataset consists of biased features from multiple heterogeneous sources (feature domains), and therefore includes not only in-distribution (ID) samples similar to the client data but also multi-domain OoD samples.

The immediate question is how the OoD samples affect the outsourced training.
In \cref{tbl:ood_ablation}, we empirically study the problem by using a 5-domain dataset, Digits, where 50\% of one domain is used on the client and the remained 50\% together with the other 4 domains serve as the public dataset on the cloud.
To conduct the cloud training, we leverage the client data to generate pseudo labels for the unlabeled public samples, following \cite{zhu2020privateknn}.
It turns out that the presence of OoD samples in the cloud greatly degrades the model accuracy.
The inherent reason for the degradation is that the distributional shift of data \cite{quinonero2008dataset} compromised the transferability of the model to the client data~\cite{weiss2016survey}.

\textbf{Problem formulation by sampling principles}. Given a client dataset $D^p$ and an open-source dataset $D^q$, the goal of open-source sampling is to find a proper subset $S$ from $D^q$, whose distribution matches $D^p$.
In~\cite{jiang2021improving}, Model-Aware K-center (MAK) formulated the sampling as a principled optimization:
\begin{align}
    \min\nolimits_{S\subseteq D^q} %
    \Delta(S, D^p) - H(S\cup D^p; D^q), \label{eq:principle_obj}
\end{align}
where %
\rev{$\Delta(S, D^p):=\Ebb_{x'\in D^p} [ \min_{x\in S} \norm{\phi(x)-\phi(x')}^2 ]$} measures \emph{proximity} as the set difference between $S$ and $D^p$ using a feature extractor $\phi$, and the latter \rev{$H(S\cup D^p;D^q) := \max_{x'\in D^q} \min_{x\in S\cup D^p} \norm{\phi(x)-\phi(x')}^2 $} measures \emph{diversity} by contradicting $S\cup D^p$ and $D^q$ (suppose $D^q$ is the most diverse set)\footnote{Note that we use $L_2$-norm distance instead of normalized cosine similarity in $\Delta(S, D^p)$ in contrast to MAK, since normalized cosine similarity is not essential if the feature space is not trained under the cosine metric. We also omit the tailedness objective which is irrelevant in our context.}. %
Solving \cref{eq:principle_obj} results in an NP-hard problem that is intractable~\cite{cook2009combinatorial}, 
and MAK provides an approximated solution by a coordinate-wise greedy strategy.
It first pre-trains the model representations on $D^p$ and finds a large candidate set with the best proximity to extracted features.
Then, it incrementally selects the most diverse samples from the candidate set until the sampling budget is used up.

Though MAK is successful in the central setting, it is not applicable when $D^p$ is isolated from cloud open-source data and is located at a resource-constrained client for two reasons:
1)~\textit{Communication inefficiency}. Uploading client data may result in privacy leakage, sending public data to the client is a direct alternative but the cost can be prohibitive.
2)~\textit{Computation inefficiency}. Pre-training a model on $D^p$ or proximal sampling (which computes the distances between paired samples from $D^q$ and $D^p$) induces unaffordable computation overheads for the low-energy client.

\subsection{Proposed Solution: Efficient Collaborative Open-Source Sampling (ECOS)}

To address the above challenges, we design a new strategy that 1) uses compressed queries to reduce the communication and computation overhead
and 2) uses a novel principled objective to effectively \emph{sample} from open-source data with the client responses of the compressed queries.

\textbf{Construct communication-efficient and an informative query set $\hat \Phi^q$ at cloud}.
Let $d$ be the number of pixels of an image, the communication overhead of transmitting $D^q$ to the client is given by $\cO(d |D^q|)$.
For communication efficiency, we optimize the following two factors:\\
i) \textit{Data dimension $d$.} First, we transmit extracted features $\Phi^q=\{\phi(x) | x\in D^q\}$ instead of images to reduce the communication overhead to $\cO(d_e |D^q|)$, where $d_e$ is a much smaller embedding dimension.
For accurate estimation of the distance $\Delta$, a pre-defined discriminative feature space is essential without extra training on the client.
Depending on resources, one may consider hand-crafted features such as HOG \cite{dalal2005histograms}, or an off-the-shelf pre-trained model such as ResNet pre-trained on ImageNet.\\
ii) \textit{Data size $|D^q|$.} Even with compression, sending all data for querying is inefficient due to the huge size of open-source data $|D^q|$. 
Meanwhile, too many queries would cast unacceptable privacy costs to the client.
As querying on similar samples leads to redundant information in querying, we propose to reduce such redundancy by selecting informative samples.
We use the classic clustering method KMeans~\cite{forgy1965cluster} for compressing similar samples by clustering them, and collect the $R$ mean vectors or \emph{centroids} into $\hat \Phi^q = \{c_r\}_{r=1}^R$. We denote $R$ as the \emph{compression size} and $\hat D^q$ as the set of original samples corresponding to $\hat \Phi^q$.

\textbf{New sampling objective}. We note that sending the compact set $\hat \Phi^q$ in place of $D^q$ prohibits the client from optimizing $\Delta(S, D^p)$ in \cref{eq:principle_obj} for $S \in D^q$. %
\rev{Instead, we sample a set of centroids  $\hat S \in \hat \Phi^q$ and decompress them by the cluster assignment into corresponding original samples with rich features afterwards.
In principle, we leverage the inequality $\Delta(S, D^p) \le \Delta(\hat S, D^p) + \Delta(\hat S, S)$ to attain a communication-efficient surrogate objective as follows:}
\begin{align}
    \min \nolimits_{\hat S \subseteq \hat D^q, S\subseteq D^q} \ 
    \underbrace{\Delta(\hat S, D^p) + \Delta(\hat S, S)}_{\text{proximity}} 
    - \underbrace{H(S; D^q)}_{\text{diversity}},
    \label{eq:ecos_obj}
\end{align}
where $\hat S$ (or $\hat D^q$) is the compressed centroid substitute of $S$ (or $D^q$).
Different from \cref{eq:principle_obj}, we decompose the proximity term into two in order to facilitate communication efficiency leveraging an informative subset $\hat D^q$.
We solve the optimization problem in a greedy manner by two steps at the client and the cloud, respectively: \\
i) At the \emph{client step}, we optimize $\Delta(\hat S, D^p)$ to find a subset of centroids ($\hat S \subset \hat D^q$) that are proximal to the client set $D^p$. %
Noticing that $\hat D^q$ contains the cluster centroids, we take advantage of the property to define a novel proximity measure of the cluster $r$: Centroid Coverage (CC), denoted as $v_r$.
Upon receiving centroids from the cloud, the client uses them to partition the local data into $\{\cC^p_r\}_{r=1}^R$ where $\cC^p_r$ denotes the $r$-th cluster partition of local data. We compute the CC score by the cardinality of the neighbor samples of the centroid $r$, i.e., $v_r=|\cC^p_r|$. 
To augment the sensitivity to the proximal clusters, we scale the CC score by a non-linear function $v'_r = \psi_s (|\cC^p_r|)$, where the scale function $\psi_s(x) = x^s$ is parameterized by $s$.\\
ii) At the \emph{cloud step}, we optimize the proximity of $S$ w.r.t. the proxy set $\hat S$, i.e., $\Delta(\hat S, S)$, and remove redundant and irrelevant samples from the candidate set to encourage diversity, i.e., $- H(S; D^q)$.
As samples among clusters are already diversified by KMeans, we only need to promote the in-cluster \emph{diversity}.
To this end, we reduce the sample redundancy per cluster at cloud by K-Center \cite{sener2018active}, which heuristically finds the most diverse samples.
Such design transfers the diversity operation to cloud and thus reduces the local computation overhead.
To maintain the \emph{proximity}, the K-Center is applied within each cloud cluster and the sampling budget per cluster is proportional to their vote numbers and the original cluster sizes.
With the normalized scores, we compute the sampling budget per cluster which is upper bounded by the ratio of the cluster in the cloud set.

\begin{algorithm}
\caption{Efficient collaborative open-source sampling (ECOS)
}\label{alg:ours}
\begin{algorithmic}[1]
    \Require Client dataset $D^p$, cloud query dataset $D^q$, sampling budget $B$, compression size $R$, feature extractor $\phi(\cdot)$, distance function $\Delta(x, S)=\min_{y\in S} \norm{ \phi(x)- \phi(y)}$, initial sample set $S=\emptyset$, score scale function $\psi_s(x)=x^s$.
    \State Extract features $\Phi^q = \{\phi(x) | x\in D^q\}$;
    \State Cloud creates a compressed dataset $\hat \Phi^q= \text{KMeans}_R(\Phi^q)$; \Comment{Compress $R$ Centroids}
    \State \colorline{$\triangleright \triangleright\triangleright$ \textbf{Client End} $\triangleright \triangleright\triangleright$}
    \State Download the feature extractor $\phi$ and $\hat \Phi^q$;
    \State Use centroids $\hat \Phi^q$ to partition $\Phi^p = \{\phi(x) | x\in D^p\}$ into clusters $\{\cC_{r}^p\}_{r=1}^R$;
    \State Compute the Centroid Coverage (CC) scores: $v_r=|\cC_{r}^p|, ~ \forall r\in \{1,\dots,R\}$;
    \State Upload scaled cluster scores $\{v_r' = \psi_s(v_r)\}_{r=1}^R$; \Comment{Proximity}
    \State \colorline{$\triangleleft \triangleleft \triangleleft$ \textbf{Cloud End} $\triangleleft \triangleleft \triangleleft$}
    \State Partition $D^q$ into clusters $\{\cC_{r}^q\}_{r=1}^R$ by centroids $\hat \Phi^q$;
    \State Compute per-cluster sampling budget $b_r = \min\left\{ \frac{|\cC_r^q|}{\sum_j |\cC_j^q|}, \frac{v_r'}{\sum_j v_j'} \right\}\cdot B$;
    \For{$r$ in $\{1, \dots, R\}$} \Comment{Decompress Centroids}
        \State Initialize $S' = \{x\}$ by a sample randomly picked from $\cC_r$;
        \While{$|S'| < b_r$} \Comment{Diverse Sampling}
        \State $u = \argmax_{x\in \cC^q_r} \Delta(x, S')$;
        \State $S' = \{u\} \cup S'$;
        \EndWhile
        \State $S=S' \cup S$;
    \EndFor
    \State \Return $S$
\end{algorithmic}
\end{algorithm}

We summarize our algorithm in \cref{alg:ours}, which enables the clients to enjoy better computation efficiency than local training and better communication efficiency than the centralized sampling (e.g., MAK).
\textbf{1)~Computation efficiency}.
Since our method only requires inference operations on the client device, which should be efficiently designed for the standard predictive functions of the device, and is training-free for the client, the major complexity of ECOS is on computing centroid coverage and is much lower than gradient-based algorithms whose complexity scales with the model size and training iterations.
As computing the CC scores only requires the nearest centroid estimation and ranking, the filtering can be efficiently done.
The total \textit{time complexity} is $\cO(C_{\phi} |D^p| + (d_e+1) R |D^p|)$, dominated by the first term, where $C_{\phi}$ is the complexity of extracting features depending on the specific method.
The second term $(d_e+1) R |D^p|$ is for computing the pair-wise distances between $\Phi^p$ and $\hat \Phi^q$ and estimating the nearest centroids per sample (or partitioning client data).
In a brief comparison, the complexity of local $T$-iteration gradient-descent training could be approximately $\cO(TC_{\phi}|D^p|)$ which is much more expensive since typically $TC_{\phi} \gg d_e$.
To complete the analysis, the \textit{space complexity} is $\cO(C_{\phi}' + (d+d_e) |D^p| + d_e R + |D^p| R)$ for the memory footprint of $\phi$, the images ($d$) or features ($d_e$) of client and cloud centroid data, and the distance matrix.
\textbf{2)~Communication efficiency}.
The downloading complexity will be $\cO(d_e R)$ for $R$ $d_e$-dimensional centroid features and the uploading complexity is $\cO(R)$ including indexes of samples.
Thus, the data that will be communicated between the client and cloud is approximately of $\cO(d_e R + R)$ complexity in total.
In comparison, downloading the whole open-source dataset by central sampling (e.g., MAK) requires $\cO(d |D^q| + B)$ complexity.
As $R < d_e R \ll d_e |D^q| \ll d |D^q|$, our method can significantly scale down the computation cost.

\textbf{Privacy protection and accountant}.
When the cloud server is compromised by an attacker, uploading CC scores leak private information of the local data samples, for example, the presence of an identity~\cite{shokri2017membership}.
To mitigate the privacy risk, we protect the uploaded scores by a Gaussian noise mechanism, i.e., $\tilde v_r = v_r + \cN(0, \sigma^2)$, and account for the privacy cost in the notion of differential privacy (DP)~\cite{dwork2006calibrating}.
DP quantifies the numerical influence of the absence of a private sample on $[v_1, \cdots, v_R]$, which is connected to the chance of exposing the sample to the attacker.
To obtain a tight bound on the privacy cost, we utilize the tool of R\'enyi Differential Privacy (RDP)~\cite{mironov2017renyi} and leverage the Poisson sampling to further amplify the privacy~\cite{zhu2019poission}.
With the noise mechanism governed by $\sigma$, the resultant privacy cost in the sense of $(\epsilon, \delta)$-DP can be accounted as $\epsilon = \cO(\gamma \sqrt{\log(1/\delta)} /{\sigma})$ where $\gamma$ is the Poisson subsampling rate and $\delta$ is a user-specified parameter.
A larger $\epsilon$ implies higher risks of privacy leakage in the probability of $\delta$.
Formal proofs can be found in \cref{sec:app:privacy_proof}.

\rev{
\textbf{Generalization error of models trained on cloud.} 
Our work can be viewed as knowledge transfer from the open-source domain to the private client domain. 
Therefore, we present \cref{thm:error_ecos} based on prior domain-adaptation theoretical results~\cite{blitzer2007learning,dong2020what,Dong2021transfer}.
\begin{theorem}\label{thm:error_ecos}
Assume that a open-source dataset $S$ is induced from a mixture of cluster distributions, i.e., $\sum_{r=1}^R \alpha_r \cD_r^q$ with cluster distribution $\cD^q_r$, $\alpha_r\in[0,1]$ and $\sum_{r=1}^R \alpha_r =1$.
Suppose client data are sampled from $\cD^p$.
Let $L(\cdot, \cdot)$ be a loss function on a hypothesis and a dataset (for empirical error) or a distribution (for generalization error).
If $f$ is governed by the parameter $\theta$ trained on $S$ and belongs to a hypothesis space $\cH$ of $VC$-dimension $d$,
then with probability at least $1 - p$ over the choice of samples, 
the inequality holds,
    \begin{align}
        L(f_{\theta}, \cD^p) &\le 
        L(f_{\theta}, S) %
        + \frac{1}{2} \sum\nolimits_{r=1}^R \alpha_r d_{\cH \Delta \cH} (\cD^q_{r}, \cD^p) + 4 \sqrt{\frac{2d \log (2 |S|) + \log(4/p)}{ |S|}} + \xi, \label{eq:main_bd_0}
    \end{align}
    where $\xi = \min_\theta \left\{ L(f_\theta, S) + L(f_\theta, D^p) \right\}$, and $d_{\cH \Delta \cH}(\cD, \cD')$ denotes the distribution divergence.
\end{theorem}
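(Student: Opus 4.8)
The plan is to view the cloud-trained classifier as the solution of a domain-adaptation problem whose source is the mixture $\cD^q := \sum_{r=1}^R \alpha_r \cD^q_r$ realized by the sampled set $S$ and whose target is the client distribution $\cD^p$, and to instantiate the classical $\cH\Delta\cH$-divergence bound of Ben-David et al.\ / Blitzer et al.~\cite{blitzer2007learning}. As the backbone I would first record the single-source inequality: for every $h\in\cH$,
\[
L(h,\cD^p) \;\le\; L(h,\cD^q) \;+\; \tfrac12\, d_{\cH\Delta\cH}(\cD^q,\cD^p) \;+\; \lambda^\ast,\qquad \lambda^\ast := \min_{h'\in\cH}\big\{ L(h',\cD^q) + L(h',\cD^p) \big\}.
\]
This follows from the triangle inequality for the $0$--$1$ risk together with the definition of $d_{\cH\Delta\cH}$ as a supremum over disagreement events of pairs from $\cH$: the error of $h$ on $\cD^p$ is controlled by its error on $\cD^q$, plus the gap in disagreement probabilities between $\cD^q$ and $\cD^p$, plus the error of the ideal joint predictor. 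Applied to $h=f_\theta$, this already produces the skeleton of \cref{eq:main_bd_0}, and the remaining work is to (i) split the divergence along the mixture, (ii) pass from $L(f_\theta,\cD^q)$ to the empirical $L(f_\theta,S)$, and (iii) identify $\lambda^\ast$ with $\xi$.

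For the mixture decomposition I would use that $L(h,\cdot)$ is affine in the distribution, so $L(h,\cD^q)=\sum_r \alpha_r L(h,\cD^q_r)$, and that $d_{\cH\Delta\cH}(\cdot,\cD^p)$ is a supremum of quantities affine in its first argument; applying the triangle inequality inside that supremum gives $d_{\cH\Delta\cH}\big(\sum_r \alpha_r \cD^q_r,\cD^p\big)\le \sum_r \alpha_r\, d_{\cH\Delta\cH}(\cD^q_r,\cD^p)$. Substituting this turns the single divergence into the weighted sum $\tfrac12\sum_r \alpha_r d_{\cH\Delta\cH}(\cD^q_r,\cD^p)$ of the statement; crucially the convexity step preserves the constant $\tfrac12$.

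For the empirical concentration I would treat $S$ as a size-$|S|$ sample drawn according to $\cD^q$ and invoke the VC uniform-convergence bound for $\cH$ of VC-dimension $d$: with probability at least $1-p$, $\sup_{h\in\cH}\big| L(h,\cD^q)-L(h,S)\big|\le 2\sqrt{\tfrac{2d\log(2|S|)+\log(4/p)}{|S|}}$. On this event I replace $L(f_\theta,\cD^q)$ by $L(f_\theta,S)$, and I also bound $\lambda^\ast$: for the parameter $\theta_\xi$ attaining $\xi=L(f_{\theta_\xi},S)+L(f_{\theta_\xi},\cD^p)$, the uniform bound gives $L(f_{\theta_\xi},\cD^q)\le L(f_{\theta_\xi},S)+2\sqrt{\tfrac{2d\log(2|S|)+\log(4/p)}{|S|}}$, hence $\lambda^\ast\le \xi + 2\sqrt{\tfrac{2d\log(2|S|)+\log(4/p)}{|S|}}$. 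Adding the two deviation contributions yields exactly the single term $4\sqrt{\tfrac{2d\log(2|S|)+\log(4/p)}{|S|}}$ in \cref{eq:main_bd_0}, and only one union-bound event of probability $p$ is spent.

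The main obstacle is the modeling step underlying the concentration argument: ECOS does not draw $S$ i.i.d.\ from $\cD^q$ but through a structured KMeans / K-Center / coverage-weighted procedure, so I would need to argue that $S$ is faithfully described as an i.i.d.\ draw from $\sum_r \alpha_r \cD^q_r$ with $\alpha_r$ the realized per-cluster sampling fractions (or, alternatively, replace the VC bound by a clustered / sampling-without-replacement concentration inequality achieving the same rate). A secondary care point is bookkeeping of constants so that the convexity step keeps the $\tfrac12$ and the two invocations of the uniform deviation bound combine to precisely the stated $4\sqrt{\cdot}$.
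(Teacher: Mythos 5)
Your proposal is correct and follows essentially the same route as the paper: the paper simply invokes the Blitzer et al.\ domain-adaptation bound (its Theorem A.2, which already packages the empirical source risk and the VC deviation term you re-derive by hand) with the mixture $\sum_r \alpha_r \cD^q_r$ as the source, and then applies the same convexity inequality $d_{\cH\Delta\cH}(\sum_r \alpha_r \cD^q_r, \cD^p) \le \sum_r \alpha_r\, d_{\cH\Delta\cH}(\cD^q_r,\cD^p)$ (cited from Peng et al.) that you prove via affinity of the disagreement probability in the first argument. The i.i.d.\ modeling issue you flag is real but is absorbed by the theorem's hypothesis that $S$ is ``induced from'' the mixture, so the paper does not address it either.
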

The proof of \cref{thm:error_ecos} is deferred to \cref{sec:app:thm}.
\cref{thm:error_ecos} shows that given a model trained on cloud, its generalization error on client data hinges on the quality and size of the ECOS-sampled subset.
Informally, if the sampled set has enough samples (the 3rd term) and follows a similar distribution as the client dataset (the 2nd term), then the model generalizes better via only training on the proxy cloud dataset (the 1st term).
In \cref{thm:error_ecos}, the model $f_\theta$ can be trained by any task-specific optimization method.
For example, minimizing a $\mu$-strongly-convex and $G$-smooth loss function $L(f_\theta, S)$ w.r.t. $\theta$ via $T$-iteration gradient descent leads to $L(f_{\theta}, S) = (1-\mu/G)^{T} | L(f_{\theta_0} , S) - L^* |$ where $L^*$ is the optimal loss.

Given \cref{eq:main_bd_0}, it can be shown that minimizing the upper bound requires a dedicated trade-off between sample size and quality.
Consider a simple case, only minimizing $\sum\nolimits_{r=1}^R \alpha_r d_{\cH \Delta \cH} (\cD^q_{r}, \cD^p)$.
The solution is that $\alpha_r$ equals $1$ if $d_{\cH \Delta \cH} (\cD^q_{r}, \cD^p)$ is smallest among all choices of clusters, otherwise zero.
However, the number of samples in the cluster $r$ is limited as $|\cC_r^q|$ which is much smaller than the whole open-source dataset $|D^q|$, and therefore $|S|$ will be too small to enlarge the third term in \cref{eq:main_bd_0}.
Thus, $\alpha_r$ should be smoothed to increase sample size by trading in some quality (distribution divergence).
In our implementation, we approximate $\alpha_r$ via the CC score $v_r^s$, where an $s<\infty$ smooths the $\alpha_r$ to trade off sample size and quality.
When $s$ vanishes, all clusters will be selected at the same chance and the divergence could be large.
}

\section{Empirical Results}
\label{sec:exp}

\textbf{Datasets.}
We use datasets from two tasks: digit recognition and object recognition.
Distinct from prior work \cite{zhu2020privateknn}, in our work, the open-source data contains samples out of the client's distribution.
With the same classes as the client dataset, we assume open-source data are from different environments and therefore include different feature distributions, for example, DomainNet~\cite{peng2019moment} and Digits~\cite{li2020fedbn}.
DomainNet includes large-sized $244\times 244$ everyday images on 6 domains and Digits consists of $28\times 28$ digit images on 5 domains.
Instead of using an overly large volume of data from a single domain like~\cite{zhu2020privateknn}, we tailor each-domain subset to contain fewer images than standard digit datasets, for instance, MNIST with 7438 images, which was previously adopted in the distributed learning setting~\cite{li2020fedbn} and mitigates the hardness of collecting enormous data.
In practice, collecting tens of thousands of images by a single client, e.g., 50000 images from MNIST domain, will be unrealistic.
Similarly, DomainNet will be tailored to only include 10 classes with 2000-5000 images per domain.
\textbf{Splits of client and cloud datasets}. For Digits, we use one domain for the client and the rest domains for the cloud as open-source set. For DomainNet, we randomly select 50\% samples from one domain for the client and leave the rest samples together with all other domains to the cloud. The difference of configurations for the two datasets is caused by their different domain gaps. Even without ID data, it is possible for Digits to transfer the knowledge across domains. \\
\textbf{Baselines.}
For a fair comparison, we compare our method to baselines with the same sampling budget.
Each experiment case is repeated for three times with seeds $\{1,2,3\}$.
We account for the privacy cost by Poisson-subsampling RDP~\cite{zhu2019poission} and translate the cost to the general privacy notion, $(\epsilon, \delta)$-DP when $\delta=10^{-5}$.
Here, we use the \emph{random} sampling as a naive baseline.
We also adopt a coreset selection method, K-Center~\cite{sener2018active}, to select informative samples within the limited budget.
\rev{Both baselines are perfectly private, because they do not access private information from clients.}
Details of hyper-parameters are deferred to \cref{sec:exp_details}.

\subsection{Evaluations on Training Outsourcing}

To demonstrate the general applicability of ECOS, we present three practical use cases of outsourcing, categorized by the form of supervisions.
The conceptual illustrations are in \cref{fig:applications}.
Per use case, we train a model on partially labeled cloud data (outsourced training) and accuracy on the client test set is reported together with standard deviations.
We present the results in \cref{tbl:alabel,tbl:adaptation,tbl:test_acc_closed_set} case by case, where we vary the domain of the client by columns.
In each column, we highlight the best result unless the difference is not statistically significant.

\begin{figure*}[t]
    \centering
    \begin{subfigure}[b]{0.55\textwidth}
         \centering
         \includegraphics[width=\textwidth]{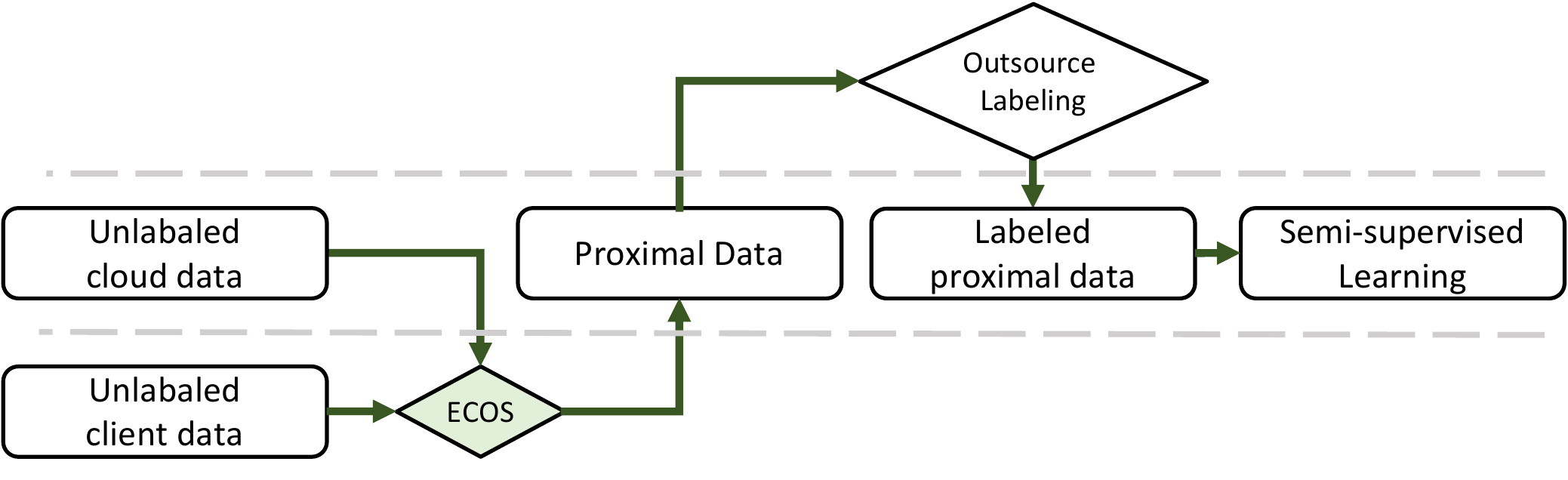}
         \vspace{-0.1in}
         \caption{Selective manual labeling.}
         \label{fig:app_outsource}
    \end{subfigure}
    \hfill
    \begin{subfigure}[b]{0.4\textwidth}
         \centering
         \includegraphics[width=\textwidth]{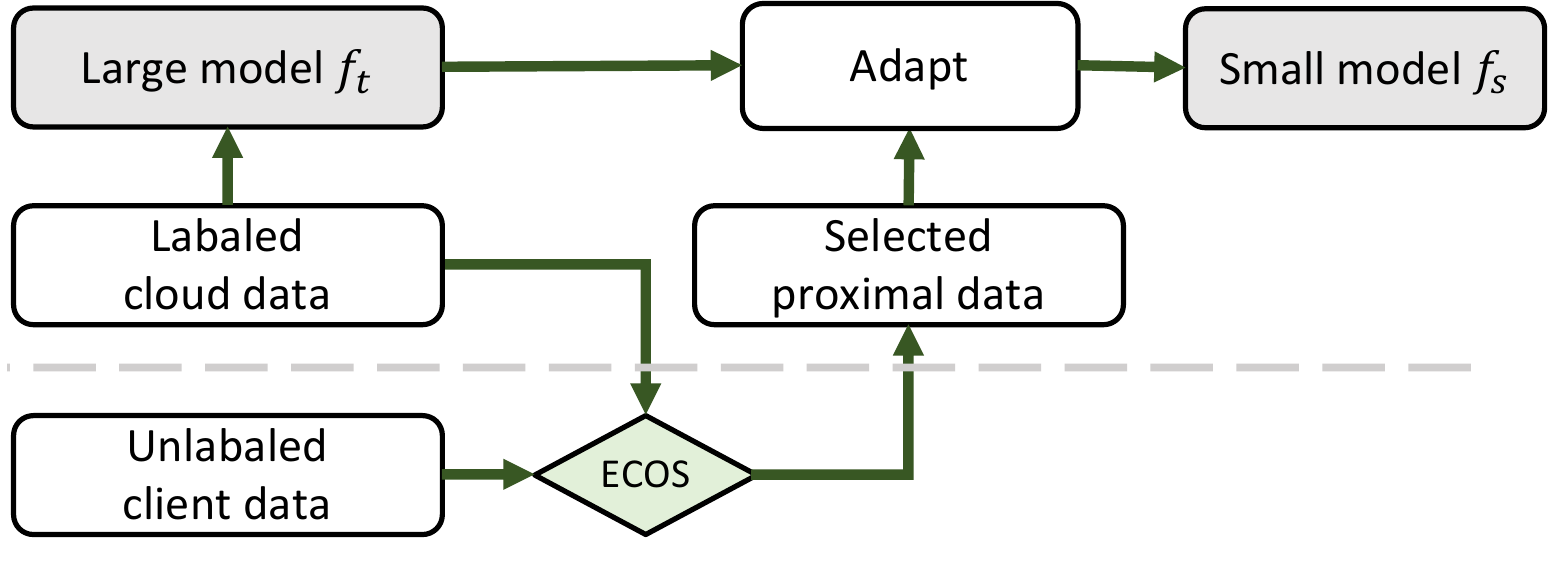}
         \vspace{-0.1in}
         \caption{Adaptive model compression.}
         \label{fig:app_comp}
    \end{subfigure}
    \hfill
    \begin{subfigure}[b]{0.6\textwidth}
         \centering
         \includegraphics[width=\textwidth]{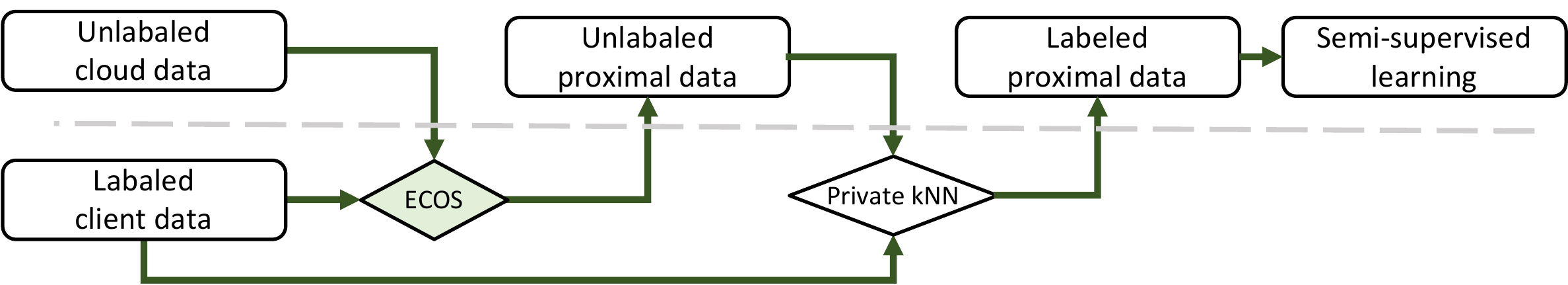}
         \vspace{-0.05in}
         \caption{Automated client labeling.}
         \label{fig:app_client_sup}
    \end{subfigure}
    \vspace{-0.1in}
    \caption{Our method is applicable to various cloud training cases, where ECOS filters the open-source samples to improve the model performance trained on (a) manual, (b) pre-trained model (teacher), and (c) pseudo supervisions.
    }
    \label{fig:applications}
\end{figure*}

\begin{table}[t]
    \caption{Test accuracy by selective labeling on Digits (top) and DomainNet (bottom).}
    \label{tbl:alabel}
    \centering
    \setlength\tabcolsep{2 pt}
    \scriptsize
    \begin{tabular}{cc*{10}{c}a}
        \toprule
        \rowcolor{white}
        \multicolumn{2}{c}{Sampling} & \multicolumn{2}{c}{MNIST} & \multicolumn{2}{c}{SVHN} & \multicolumn{2}{c}{USPS} & \multicolumn{2}{c}{SynthDigits} & \multicolumn{2}{c}{MNIST-M} & Average \\
        \rowcolor{white}
        Budget & Method & Acc (\%) $\uparrow$ & $\epsilon \downarrow$  & Acc (\%) $\uparrow$ & $\epsilon \downarrow$  & Acc (\%) $\uparrow$ & $\epsilon \downarrow$  & Acc (\%) $\uparrow$ & $\epsilon \downarrow$  & Acc (\%) $\uparrow$ & $\epsilon \downarrow$  & Acc (\%) $\uparrow$ \\
        \midrule
        \multirow{3}{*}{ 2000}  & Ours & $\mathbf{97.3}$\xsm{$\pm 0.1$} &  0.22 & $\mathbf{68.7}$\xsm{$\pm 0.3$} &  0.22 & $90.8$\xsm{$\pm 0.1$} &  0.22 & $\mathbf{84.4}$\xsm{$\pm 0.6$} &  0.22 & $70.4$\xsm{$\pm 0.6$} &  0.22 & $\mathbf{82.3}$ \\
         & K-Center & $96.7$\xsm{$\pm 0.3$} &  0.00 & $65.1$\xsm{$\pm 1.3$} &  0.00 & $90.1$\xsm{$\pm 0.7$} &  0.00 & $80.2$\xsm{$\pm 1.1$} &  0.00 & $70.1$\xsm{$\pm 0.3$} &  0.00 & $80.4$ \\
         & Random & $96.5$\xsm{$\pm 0.3$} &  0.00 & $64.0$\xsm{$\pm 0.3$} &  0.00 & $91.6$\xsm{$\pm 1.0$} &  0.00 & $83.8$\xsm{$\pm 0.3$} &  0.00 & $70.9$\xsm{$\pm 0.6$} &  0.00 & $81.4$ \\
        \midrule
        \multirow{3}{*}{ 5000}  & Ours & $\mathbf{98.1}$\xsm{$\pm 0.2$} &  0.22 & $\mathbf{74.6}$\xsm{$\pm 1.0$} &  0.22 & $\mathbf{93.5}$\xsm{$\pm 0.3$} &  0.22 & $\mathbf{91.2}$\xsm{$\pm 0.4$} &  0.22 & $74.5$\xsm{$\pm 0.5$} &  0.22 & $\mathbf{86.4}$ \\
         & K-Center & $97.9$\xsm{$\pm 0.2$} &  0.00 & $72.3$\xsm{$\pm 0.6$} &  0.00 & $92.7$\xsm{$\pm 0.9$} &  0.00 & $89.6$\xsm{$\pm 0.3$} &  0.00 & $74.0$\xsm{$\pm 0.5$} &  0.00 & $85.3$ \\
         & Random & $97.6$\xsm{$\pm 0.3$} &  0.00 & $70.0$\xsm{$\pm 0.3$} &  0.00 & $93.0$\xsm{$\pm 0.6$} &  0.00 & $89.7$\xsm{$\pm 0.4$} &  0.00 & $73.9$\xsm{$\pm 0.7$} &  0.00 & $84.8$ \\
        \midrule
        -  & Local & $99.1$\xsm{$\pm 0.1$} &  0.00 & $88.8$\xsm{$\pm 0.2$} &  0.00 & $98.9$\xsm{$\pm 0.1$} &  0.00 & $96.4$\xsm{$\pm 0.2$} &  0.00 & $88.8$\xsm{$\pm 0.2$} &  0.00 & $94.4$ \\
        \bottomrule
    \end{tabular}
    \begin{tabular}{cc*{12}{c}a}
        \toprule
        \rowcolor{white}
        \multicolumn{2}{c}{Sampling} & \multicolumn{2}{c}{Clipart} & \multicolumn{2}{c}{Infograph} & \multicolumn{2}{c}{Painting} & \multicolumn{2}{c}{Quickdraw} & \multicolumn{2}{c}{Real} & \multicolumn{2}{c}{Sketch} & Average \\
        \rowcolor{white}
        Budget & Method & Acc (\%) $\uparrow$ & $\epsilon \downarrow$  & Acc (\%) $\uparrow$ & $\epsilon \downarrow$  & Acc (\%) $\uparrow$ & $\epsilon \downarrow$  & Acc (\%) $\uparrow$ & $\epsilon \downarrow$  & Acc (\%) $\uparrow$ & $\epsilon \downarrow$ & Acc (\%) $\uparrow$ & $\epsilon \downarrow$ & Acc (\%) $\uparrow$ \\
        \midrule
        \multirow{3}{*}{ 1000}  & Ours & $\mathbf{88.4}$\xsm{$\pm 1.5$} &  0.58 & $\mathbf{52.6}$\xsm{$\pm 0.9$} &  0.58 & $\mathbf{90.4}$\xsm{$\pm 1.7$} &  0.58 & $\mathbf{84.3}$\xsm{$\pm 1.6$} &  0.58 & $92.1$\xsm{$\pm 1.2$} &  0.58 & $\mathbf{87.2}$\xsm{$\pm 0.5$} &  0.58 & $\mathbf{82.5}$ \\
         & K-Center & $86.8$\xsm{$\pm 0.3$} &  0.00 & $50.5$\xsm{$\pm 0.9$} &  0.00 & $89.1$\xsm{$\pm 1.4$} &  0.00 & $27.2$\xsm{$\pm 1.8$} &  0.00 & $\mathbf{92.5}$\xsm{$\pm 0.1$} &  0.00 & $85.6$\xsm{$\pm 1.4$} &  0.00 & $72.0$ \\
         & Random & $86.9$\xsm{$\pm 0.8$} &  0.00 & $47.4$\xsm{$\pm 2.7$} &  0.00 & $88.6$\xsm{$\pm 0.1$} &  0.00 & $77.9$\xsm{$\pm 2.4$} &  0.00 & $91.4$\xsm{$\pm 0.3$} &  0.00 & $86.9$\xsm{$\pm 0.5$} &  0.00 & $79.9$ \\
        \midrule
        \multirow{3}{*}{ 3000}  & Ours & $93.2$\xsm{$\pm 0.4$} &  0.58 & $\mathbf{58.1}$\xsm{$\pm 0.6$} &  0.58 & $92.5$\xsm{$\pm 1.1$} &  0.58 & $\mathbf{89.2}$\xsm{$\pm 0.9$} &  0.58 & $\mathbf{94.4}$\xsm{$\pm 0.2$} &  0.58 & $92.8$\xsm{$\pm 0.2$} &  0.58 & $\mathbf{86.7}$ \\
         & K-Center & $93.5$\xsm{$\pm 1.1$} &  0.00 & $56.3$\xsm{$\pm 0.3$} &  0.00 & $\mathbf{92.9}$\xsm{$\pm 0.3$} &  0.00 & $60.5$\xsm{$\pm 8.7$} &  0.00 & $94.1$\xsm{$\pm 0.2$} &  0.00 & $92.1$\xsm{$\pm 0.7$} &  0.00 & $81.6$ \\
         & Random & $92.5$\xsm{$\pm 0.6$} &  0.00 & $53.6$\xsm{$\pm 1.4$} &  0.00 & $91.7$\xsm{$\pm 0.8$} &  0.00 & $86.1$\xsm{$\pm 0.4$} &  0.00 & $93.5$\xsm{$\pm 0.3$} &  0.00 & $93.0$\xsm{$\pm 0.2$} &  0.00 & $85.1$ \\
        \midrule
        -  & Local & $87.0$\xsm{$\pm 0.3$} &  0.00 & $51.7$\xsm{$\pm 0.7$} &  0.00 & $85.9$\xsm{$\pm 0.4$} &  0.00 & $83.5$\xsm{$\pm 0.3$} &  0.00 & $93.5$\xsm{$\pm 0.1$} &  0.00 & $81.7$\xsm{$\pm 0.6$} &  0.00 & $80.6$ \\
        \bottomrule
    \end{tabular}
    \vspace{-0.25in}
\end{table}

\textbf{Case 1: Selective manual labeling}.
We assume that the cloud will label the filtered in-domain samples by using a third-party label service, e.g., Amazon Mechanical Turk~\cite{paolacci2010running}, or by asking the end clients for manual labeling.
As the selected samples are non-private, they can be freely shared with a third parity.
But the high cost of manual labeling or service is the pain point, which should be carefully constrained within a finite \emph{budget} of demanded labels.

Given a specified sampling budget, we compare the test accuracy of semi-supervised learning (FixMatch) on sampled data in \cref{tbl:alabel}.
Since the ECOS tends to select in-distribution samples, it eases the transfer of cloud-trained model to the client data.
On the Digits dataset, we find that our method attains more accuracy gains as budget increases, demonstrating that more effective labels are selected.
On the DomainNet dataset, our method outperforms baselines on 5 out of 6 domains and is stable in most domains given a small budget (1000) and is superior on average.
Given higher budgets, the accuracy of all methods are improved, and our method is outstanding on Infograph and Quickdraw domains and is comparable to the baselines on other domains.

\begin{table}[ht]
    \caption{Test accuracy by adaptive model compression on DomainNet.}
    \label{tbl:adaptation}
    \centering
    \scriptsize
    \setlength\tabcolsep{2 pt}
    \begin{tabular}{cc*{12}{c}a}
        \toprule
        \rowcolor{white}
        \multicolumn{2}{c}{Sampling} & \multicolumn{2}{c}{Clipart} & \multicolumn{2}{c}{Infograph} & \multicolumn{2}{c}{Painting} & \multicolumn{2}{c}{Quickdraw} & \multicolumn{2}{c}{Real} & \multicolumn{2}{c}{Sketch} & Average \\
        \rowcolor{white}
        Budget & Method & Acc (\%) $\uparrow$ & $\epsilon \downarrow$  & Acc (\%) $\uparrow$ & $\epsilon \downarrow$  & Acc (\%) $\uparrow$ & $\epsilon \downarrow$  & Acc (\%) $\uparrow$ & $\epsilon \downarrow$  & Acc (\%) $\uparrow$ & $\epsilon \downarrow$ & Acc (\%) $\uparrow$ & $\epsilon \downarrow$ & Acc (\%) $\uparrow$  \\
        \midrule %
        \multirow{3}{*}{ 1000}  & Ours & $82.9$\xsm{$\pm 0.7$} &  0.58 & $\mathbf{48.5}$\xsm{$\pm 2.7$} &  0.58 & $\mathbf{85.4}$\xsm{$\pm 1.0$} &  0.58 & $\mathbf{81.3}$\xsm{$\pm 2.5$} &  0.58 & $91.4$\xsm{$\pm 0.6$} &  0.58 & $\mathbf{82.4}$\xsm{$\pm 1.1$} &  0.58 & $\mathbf{78.6}$ \\
         & K-Center & $81.2$\xsm{$\pm 0.9$} &  0.00 & $44.6$\xsm{$\pm 0.9$} &  0.00 & $84.5$\xsm{$\pm 2.2$} &  0.00 & $41.7$\xsm{$\pm 1.9$} &  0.00 & $\mathbf{92.4}$\xsm{$\pm 0.5$} &  0.00 & $80.1$\xsm{$\pm 2.1$} &  0.00 & $70.8$ \\
         & Random & $\mathbf{83.8}$\xsm{$\pm 0.7$} &  0.00 & $44.4$\xsm{$\pm 2.1$} &  0.00 & $83.8$\xsm{$\pm 1.6$} &  0.00 & $76.3$\xsm{$\pm 2.2$} &  0.00 & $90.1$\xsm{$\pm 0.6$} &  0.00 & $80.1$\xsm{$\pm 0.7$} &  0.00 & $76.4$ \\
        \midrule %
        \multirow{3}{*}{ 3000}  & Ours & $\mathbf{90.6}$\xsm{$\pm 0.6$} &  0.58 & $\mathbf{51.4}$\xsm{$\pm 1.9$} &  0.58 & $\mathbf{89.6}$\xsm{$\pm 1.2$} &  0.58 & $\mathbf{87.6}$\xsm{$\pm 0.3$} &  0.58 & $93.6$\xsm{$\pm 0.7$} &  0.58 & $\mathbf{88.4}$\xsm{$\pm 1.5$} &  0.58 & $\mathbf{83.5}$ \\
         & K-Center & $88.9$\xsm{$\pm 2.3$} &  0.00 & $51.2$\xsm{$\pm 0.4$} &  0.00 & $89.4$\xsm{$\pm 0.9$} &  0.00 & $57.6$\xsm{$\pm 4.4$} &  0.00 & $\mathbf{94.5}$\xsm{$\pm 0.5$} &  0.00 & $86.9$\xsm{$\pm 0.6$} &  0.00 & $78.1$ \\
         & Random & $88.4$\xsm{$\pm 0.8$} &  0.00 & $47.6$\xsm{$\pm 1.9$} &  0.00 & $89.4$\xsm{$\pm 1.0$} &  0.00 & $84.7$\xsm{$\pm 0.2$} &  0.00 & $93.0$\xsm{$\pm 1.0$} &  0.00 & $86.3$\xsm{$\pm 1.2$} &  0.00 & $81.6$ \\
        \bottomrule
    \end{tabular}
\end{table}

\textbf{Case 2: Adaptive model compression}.
Due to the large volume of the open-source dataset, a larger model is desired for better capturing the various features, which however is so inefficient to fit into the resource-constrained client devices or specialize for the data distribution of the client.
Confronting this challenge, model compression~\cite{bucilua2006model} is a conventional idea to forge a memory-efficient model by transferring knowledge from large models to small ones.
Specifically, we first pre-train a large \emph{teacher} model $f_{t}$ on all cloud data by the supervised learning, assuming labels are available in advance.
Still, we use an ImageNet-pre-trained model to initialize the feature extractor $\phi(\cdot)$ of a \emph{student} model $f_s$.
Then the client will use the downloaded feature extractor $\phi$ to filter samples.
Here, we utilize knowledge distillation~\cite{hinton2015distilling} to finetune the $f_s$ %
with an additional classifier head upon the $\phi$.
On the selected samples, we train a linear classifier head for 30 epochs under the supervision of true labels and the teacher model $f_t$, and then fine-tune the full network $f_s$ for 500 epochs.
The major challenge comes from distributional biases between the multi-source open-source data and the client data.
Leveraging ECOS, we may sample data near the client distribution and reduce the bias in the %
follow-up compression process.

We simulate the case on the large-sized image dataset, DomainNet, which is demanding for large-scale networks, e.g., ResNet50, to effectively learn the complicated features.
Here, we compress ResNet50 into a smaller network, ResNet18, by using an adaptively selected subset of the cloud dataset.
We omit the experiment for digit images where a large model may not be necessary for such a small image size.
In \cref{tbl:adaptation}, %
we present the test accuracy of compressed ResNet18 using 3000 samples from DomainNet in finetuning.
With a small portion of privacy cost ($\epsilon<0.6$), our method improves the accuracy on Clipart and Real domains against the baselines.
Note that the model accuracy here is lower than label outsourcing in \cref{tbl:alabel}, and reason is that the supervisions from the larger models are just an approximation of the full dataset.
Without using the full dataset for compression, the training can be completed fast and responsively on the demand of a client.

\begin{table}[ht]
    \caption{Test accuracy of client labeling on two datasets: Digits (top) and DomainNet (bottom). 
    }
    \label{tbl:test_acc_closed_set}
    \centering
    \setlength\tabcolsep{2 pt}
    \scriptsize
    \begin{tabular}{cc*{10}{c}a}
        \toprule
        \rowcolor{white}
        \multicolumn{2}{c}{Sampling} & \multicolumn{2}{c}{MNIST} & \multicolumn{2}{c}{SVHN} & \multicolumn{2}{c}{USPS} & \multicolumn{2}{c}{SynthDigits} & \multicolumn{2}{c}{MNIST-M} & Average \\
        \rowcolor{white}
        Budget & Method & Acc (\%) $\uparrow$ & $\epsilon \downarrow$  & Acc (\%) $\uparrow$ & $\epsilon \downarrow$  & Acc (\%) $\uparrow$ & $\epsilon \downarrow$  & Acc (\%) $\uparrow$ & $\epsilon \downarrow$  & Acc (\%) $\uparrow$ & $\epsilon \downarrow$ & Acc (\%) $\uparrow$ \\
        \midrule
        \multirow{3}{*}{ 5000}  & Ours & $\mathbf{84.2}$\xsm{$\pm 2.3$} &  5.35 & $47.9$\xsm{$\pm 3.1$} &  5.32 & $\mathbf{86.1}$\xsm{$\pm 1.0$} &  5.35 & $68.6$\xsm{$\pm 1.6$} &  5.35 & $\mathbf{58.4}$\xsm{$\pm 1.9$} &  5.35 & $\mathbf{69.0}$ \\
         & K-Center & $81.9$\xsm{$\pm 3.4$} &  5.34 & $48.4$\xsm{$\pm 1.2$} &  5.33 & $82.1$\xsm{$\pm 1.2$} &  5.34 & $69.4$\xsm{$\pm 1.9$} &  5.34 & $55.4$\xsm{$\pm 2.0$} &  5.34 & $67.4$ \\
         & Random & $81.8$\xsm{$\pm 4.1$} &  5.34 & $45.3$\xsm{$\pm 3.0$} &  5.29 & $81.2$\xsm{$\pm 2.3$} &  5.34 & $65.9$\xsm{$\pm 2.7$} &  5.34 & $55.5$\xsm{$\pm 2.6$} &  5.34 & $65.9$ \\
        \bottomrule
    \end{tabular}
    \begin{tabular}{cc*{12}{c}a}
        \toprule
        \rowcolor{white}
          \multicolumn{2}{c}{Sampling} & \multicolumn{2}{c}{Clipart} & \multicolumn{2}{c}{Infograph} & \multicolumn{2}{c}{Painting} & \multicolumn{2}{c}{Quickdraw} & \multicolumn{2}{c}{Real} & \multicolumn{2}{c}{Sketch} & Average \\
         \rowcolor{white}
          Budget & Method & Acc (\%) $\uparrow$ & $\epsilon \downarrow$  & Acc (\%) $\uparrow$ & $\epsilon \downarrow$  & Acc (\%) $\uparrow$ & $\epsilon \downarrow$  & Acc (\%) $\uparrow$ & $\epsilon \downarrow$  & Acc (\%) $\uparrow$ & $\epsilon \downarrow$ & Acc (\%) $\uparrow$ & $\epsilon \downarrow$ & Acc (\%) $\uparrow$  \\
        \midrule
        \multirow{3}{*}{ 3000}  & Ours & $33.2$\xsm{$\pm 5.9$} &  4.46 & $\mathbf{23.8}$\xsm{$\pm 2.2$} &  3.50 & $\mathbf{47.4}$\xsm{$\pm 3.3$} &  4.51 & $\mathbf{39.8}$\xsm{$\pm 7.7$} &  4.87 & $62.9$\xsm{$\pm 1.9$} &  4.92 & $\mathbf{51.7}$\xsm{$\pm 1.2$} &  4.94 & $\mathbf{43.2}$ \\
         & K-Center & $\mathbf{39.3}$\xsm{$\pm 3.6$} &  4.57 & $18.2$\xsm{$\pm 2.7$} &  3.61 & $46.6$\xsm{$\pm 2.5$} &  4.52 & $36.1$\xsm{$\pm 2.9$} &  4.94 & $\mathbf{63.8}$\xsm{$\pm 1.9$} &  4.96 & $47.7$\xsm{$\pm 2.8$} &  4.96 & $42.0$ \\
         & Random & $30.7$\xsm{$\pm 2.2$} &  4.41 & $21.6$\xsm{$\pm 4.9$} &  3.43 & $44.0$\xsm{$\pm 3.1$} &  4.53 & $39.0$\xsm{$\pm 5.6$} &  4.82 & $59.9$\xsm{$\pm 3.6$} &  4.75 & $47.2$\xsm{$\pm 3.4$} &  4.94 & $40.4$ \\
         \bottomrule
    \end{tabular}
\end{table}

\textbf{Case 3: Automated client labeling}.
When the client obtained labeled samples, for example, photos labeled by phone users, the cloud only needs to collect an unlabeled public dataset.
Therefore, we may automate the labeling process leveraging the client supervision knowledge to reduce cost or users' efforts on manual labeling.
To be specific, we let the client generate pseudo labels for the cloud data based on their neighbor relation, as previously studied by \cite{zhu2020privateknn} (private kNN).
However, the private kNN assumes that the client and the cloud follow the same distribution, which weakens its applicability confronting the heterogeneity and the large scale of open-source data.
Thus, we utilize ECOS as a pre-processing of open-source data before being labeled by private kNN.
Therefore, we have two rounds of communication for transferring client knowledge: proximal-data sampling by the ECOS and client labeling by the private kNN \cite{zhu2020privateknn}.
\rev{To compose the privacy costs from these two steps, we utilize the analytical moment accountant technique to get a tight privacy bound~\cite{wang2019subsampled}.}
Interested readers can refer to \cref{sec:private_knn} for a brief introduction to private kNN and our implementations.

In experiments, we use the state-of-the-art private pseudo-labeling method, private kNN~\cite{zhu2020privateknn}, to label the subsampled open-source data with the assistance from the labeled client dataset.
To reduce the sensitivity of private kNN w.r.t. the threshold, we instead enforce the number of the selected labels to be 600 and balance the sizes by selecting the top-60 samples with the highest confidence per class.
Then, we adopt the popular semi-supervised learning method, FixMatch~\cite{sohn2020fixmatch}, to train the classifier.
In~\cref{tbl:test_acc_closed_set}, we report the results when cloud features are distributionally biased from the client ones but they share the same class set.
For each domain choice of client data, we will use the other domains as the cloud dataset.
Distinct from prior studies, e.g., in~\cite{zhu2020privateknn} or~\cite{papernot2018scalable}, %
we assume 80-90\% of the cloud data are out of the client distribution and are heterogeneously aggregated from different domains, casting greater challenges in learning.
On the Digits, we eliminate all ID data from the cloud set to harden the task.
Both on Digits and DomainNet datasets, our method consistently outperforms the two sampling baselines under the similar privacy costs.
The variance of privacy costs is mainly resulted from the actual sampling sizes.
Though simply adopting K-Center outperforms the random sampling, it still presents larger gaps compared to our method in multiple domains. For instance, in Quickdraw domain, given 3000 sampling budget, the K-Center method performs poorly and is even worse than a random sampling.

\subsection{Qualitative Study}

\begin{table}[b]
    \caption{Ablation study of the proposed method on DomainNet. Test accuracy of the client labeling case is reported.}
    \label{tab:ablation}
    \centering
    \setlength\tabcolsep{2 pt}
    \tiny
    \begin{tabular}{cc*{6}{c}a}
    \toprule
    \rowcolor{white}
    Proximity & Diversity &  Clipart & Infograph & Painting & Quickdraw & Real & Sketch & Average \\
    \midrule
    \xmark & \xmark & $30.7$\xsm{$\pm 2.2$} & $21.6$\xsm{$\pm 4.9$} & $44.0$\xsm{$\pm 3.1$} & $39.0$\xsm{$\pm 5.6$} & $59.9$\xsm{$\pm 3.6$} & $47.2$\xsm{$\pm 3.4$} & $40.4$ \\
    \cmark & \xmark & $25.5$ \xsm{$\pm 5.7$} & $21.1$\xsm{$\pm 0.5$} & $41.4$\xsm{$\pm 5.3$} & $31.3$\xsm{$\pm 1.3$} & $60.3$\xsm{$\pm 2.3$} & $31.9$\xsm{$\pm 2.9$} & $35.2$ \\
    \xmark & \cmark &$\mathbf{39.3}$\xsm{$\pm 3.6$} & $18.2$\xsm{$\pm 2.7$} & $46.6$\xsm{$\pm 2.5$} & $36.1$\xsm{$\pm 2.9$} & $\mathbf{63.8}$\xsm{$\pm 1.9$} & $47.7$\xsm{$\pm 2.8$} & $42.0$ \\
    \cmark & \cmark & $33.2$\xsm{$\pm 5.9$} & $\mathbf{23.8}$\xsm{$\pm 2.2$} & $\mathbf{47.4}$\xsm{$\pm 3.3$} & $\mathbf{39.8}$\xsm{$\pm 7.7$} & $\mathbf{62.9}$\xsm{$\pm 1.9$} & $\mathbf{51.7}$\xsm{$\pm 1.2$} & $\mathbf{43.2}$ \\
    \bottomrule
    \end{tabular}
\end{table}

To better understand the proposed method, we conduct a series of qualitative studies on client labeling.
We use two DomainNet datasets as a representative benchmark in the studies.
\begin{wrapfigure}{r}{0.35\textwidth}
    \centering
    \includegraphics[width=0.32\textwidth]{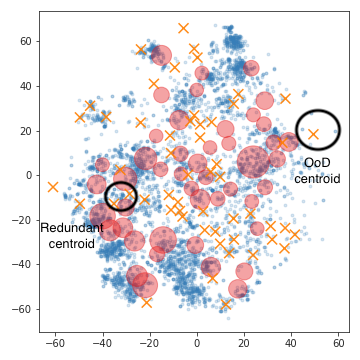}
    \caption{Demonstration of the centroids qualified by private CC. We use blue dots to represent the client data from Real domain of DomainNet. For the data of the cloud domains, larger circles represent centroids with higher CC and orange crosses are rejected OoD centroids.}
    \label{fig:DomainNet_covrage}
    \vspace{-0.1in}
\end{wrapfigure}
1)~\textbf{Ablation study}. In \cref{tab:ablation}, we conduct ablation studies to evaluate the effect of different objectives introduced by ECOS, following the client labeling benchmark with a 3000 sample budget.
Without proximity and diversity objectives, we let the baseline be the naive random sampling.
We first include the proximity objective, where we greedily select samples from top-scored clusters until the budget is fulfilled.
\rev{However, we find that the naive proximity objective results in a quite negative effect compared to the random baseline.
The failure can be attributed to the nature of clustering, which includes more similar samples and thereby lacks diversity.}
When diversity is encouraged and combined with the proximal votes, we find the performance is improved significantly in multiple domains and on average.
Now with diversity, the proximity objective can further improve the sampling in Painting, Quickdraw, and Sketch domains significantly.
2)~\textbf{Visualize cluster selection}.
In \cref{fig:DomainNet_covrage}, we demonstrate that the CC can effectively reject OoD centroids.
Also, it is interesting to observe that when multiple centroids are distributed closely, then they will compete with each other and reject the redundant ones consequently.
3)~\textbf{Efficiency}. In \cref{sec:method}, we studied the communication efficiency theoretically.
Empirically, ECOS only needs to upload 100 bytes of the CC scores in all experiments, while traditional outsourcing needs to upload the dataset, which is about 198MB for the lowest load in DomainNet (50\% of Sketch domain data for client).
Even counting the download load, ECOS only needs to download 45MB of the pre-trained ResNet18 feature extractor together with 51KB data of centroid features, which is much less than data uploading.
More detailed evaluation of the efficiency is placed in \cref{sec:app:eff} and we study how sample budgets and privacy budgets trade off the accuracy of ECOS models in \cref{sec:app:smp_eff}.

\section{Conclusion}
\vspace{-0.1in}

In this paper, we explore the possibility of outsourcing model training without access to the client data.
To reconcile the data shortage from the target domain, we propose to find a surrogate dataset from the source agnostic public dataset.
We find that the heterogeneity of the open-source data greatly compromises the performance of trained models.
To tackle this practical challenge, we propose a collaborative sampling solution, ECOS, that can efficiently and effectively filter open-source samples and thus benefits follow-up learning tasks.
We envision this work as a milestone for the private and efficient outsourcing from low-power and cost-effective end devices.

\rev{
We also recognize open questions of the proposed solution for future studies.
For example, the public dataset may require additional data processing, e.g., aligning and cropping for improved prediction accuracy.
In our empirical studies, we only consider the computer vision tasks, though no assumption was made on the data structures. 
We expect the principles to be adapted to other data types with minimal efforts.
More data types, including tabular and natural-language data, will be considered in the follow-up works.
Take the language data for example:
BERT is among the most popular pre-trained models for extracting features from sentences~\cite{devlin2018bert}, upon which ECOS can assistant to sample proxy data from massive public dataset, for instance, WikiText-103~\cite{merity2016pointer} based on 28,475 Wikipedia articles.
Alternative to the $L_2$-norm based proximity objectives, advanced semantic features could also enhance the sampling effectiveness of ECOS in language data~\cite{zad2021survey}. 
More discussions on the social impacts of this work are enclosed in \cref{sec:app:dicussion}.
}

\begin{ack}
This research was funded by Sony AI. 
This material is based in part upon work supported by the National Science Foundation under Grant IIS-2212174, IIS-1749940, Office of Naval Research N00014-20-1-2382, and National Institute on Aging (NIA) RF1AG072449.
\end{ack}

\bibliographystyle{abbrv}
	\bibliography{auto_gen}  %

\clearpage

\appendix

\section{More Method Details}

In this section, we elaborate on additional technical and theoretical details of our paper.

\subsection{Automated client labeling via private kNN}
\label{sec:private_knn}

Here, we briefly introduce the main idea of client labeling by private kNN.
Given $C$ classes, labeling is done by nearest neighbor voting:
\begin{align*}
    f(x) = \argmax \nolimits_{c \in [C]} A_{\epsilon}(v_c), ~ v_c = |\{ (x', y') \in \NN_K(x) | y'=c \}|,
\end{align*}
where $\NN_K(x)$ is a set including the $K$-nearest neighbors of $x$ in the client dataset.
$A_{\epsilon}$ is a privatization mechanism complying with the notion of $\epsilon$-Differential Privacy (DP) \cite{dwork2006calibrating}.
In brief, privatization is done by adding Gaussian noise to a value with finite sensitivity.
To filter out potential wrong labels, the client only returns high-confident samples by screening.
Let the confidence of a pseudo label be $s(x) = \max_{c\in [C]}A_{\epsilon}(v_c)$ which is also privatized by the Gaussian noise mechanism.
We find that the original version of screening may suffer from a large imbalance of pseudo labels.
Per class, we screen the pseudo labels by selecting the top-$60$ confident samples given $600$ labeling budget.

\subsection{Improving client labeling}

Because of the noise mechanism for privacy protection, the client labeling may be quite random if the selected samples are hard to discriminate.
Thus, we propose to improve the discrimination of selected samples in advance, when the ECOS selects samples for labeling.
First, we estimate the discrimination by the confidence in the ECOS.
The ECOS confidence is defined by the vote count of the highest-voted class subtracting the one of the second highest one, denoted as $v^{\text{conf}}_r$.
To merit the balancedness of samples, we filter the clusters to keep the top-$70\%$ samples with the highest confidence \emph{per class}.
When decompressing the clusters on the cloud, we incorporate the confidence into the sampling score by $v'_r = \psi\left[ (v^{\text{conf}}_r  + v_r) /2 \right]$ where $v_r$ is the original score.

\subsection{Privacy Accountant for ECOS}
\label{sec:app:privacy_proof}

To understand the privacy cost of ECOS, we review the techniques that are essential to establish the privacy bound.
\begin{definition}[Differential Privacy~\cite{dwork2006calibrating}]
Suppose $\epsilon$ and $\delta$ are two positive constants.
A randomized algorithm $\cM: \cX \rightarrow \cY$ is $(\epsilon, \delta)$-DP if for every pair of neighboring datasets $X, X' \in \cX$, and every possible measurable output set $Y\subset \cY$ the following inequality holds:
\begin{align*}
    \text{Pr}[\cM(X) \in Y] \le e^\epsilon \text{Pr}[\cM(X') \in Y] + \delta,
\end{align*}
where $\text{Pr}[\cdot]$ denotes the probability of a given event.
\end{definition}
DP provides a way to quantify the privacy risk (termed as the difference between two probability given a pair of similar but different inputs) in the probability of $\delta$.
Though DP is a simple notion for risks, the estimation of a tight privacy bound is still challenging.
For this reason, RDP is proposed an alternative tool.
\begin{definition}(R\'enyi Differential Privacy (RDP)~\cite{mironov2017renyi})
A randomized algorithm $\cM: \cX \rightarrow \cY$ is $(\alpha, \epsilon)$-RDP with order $\alpha > 1$ if for all neighboring datasets $X, X'$ the following holds
\begin{align*}
    D_{\alpha}(\cM(X) || \cM(X')) \le \epsilon,
\end{align*}
where $ D_{\alpha}(\cdot || \cdot)$ is the R\'enyi divergence between two distributions.
\end{definition}
The RDP and DP can be connected by \cref{lm:RDP_to_DP}. 
\begin{lemma} \label{lm:RDP_to_DP}
If a mechanism $\cM$ satisfies $(\alpha, \epsilon)$-RDP, then it also satisfies $(\epsilon + \frac{\log 1/\delta}{\alpha - 1}, \delta)$-DP.
\end{lemma}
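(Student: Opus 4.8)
The plan is to instantiate the two definitions and reduce the claim to a tail bound on the likelihood-ratio random variable. Write $P = \cM(X)$ and $Q = \cM(X')$ for an arbitrary pair of neighboring datasets. By the definition of $(\alpha, \epsilon)$-RDP, the order-$\alpha$ R\'enyi divergence obeys $D_\alpha(P \| Q) \le \epsilon$, which unpacks to the moment bound $\Ebb_{x \sim Q}[(P(x)/Q(x))^\alpha] \le e^{(\alpha-1)\epsilon}$; absorbing one factor of the ratio into the measure rewrites this against $P$ as the equivalent form $\Ebb_{x \sim P}[(P(x)/Q(x))^{\alpha-1}] \le e^{(\alpha-1)\epsilon}$. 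The goal is then to show $P(Y) \le e^{\epsilon'} Q(Y) + \delta$ for every measurable $Y$, where $\epsilon' = \epsilon + \frac{\log(1/\delta)}{\alpha-1}$.

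First I would fix a measurable set $Y$ and split it according to the size of the likelihood ratio. Define the \emph{bad} region $B = \{x : P(x)/Q(x) > e^{\epsilon'}\}$. On $Y \setminus B$ the pointwise bound $P(x) \le e^{\epsilon'} Q(x)$ holds, so integrating gives $P(Y \setminus B) \le e^{\epsilon'} Q(Y \setminus B) \le e^{\epsilon'} Q(Y)$. It then remains only to control the mass $P(Y \cap B) \le P(B)$ sitting in the bad region.

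The key step is to bound $P(B)$ by $\delta$ via a Markov inequality applied to the R\'enyi moment. Since $\alpha > 1$, the event $B$ is equivalent to $(P(x)/Q(x))^{\alpha-1} > e^{(\alpha-1)\epsilon'}$, so Markov's inequality together with the moment bound yields $P(B) \le e^{-(\alpha-1)\epsilon'} \Ebb_{x \sim P}[(P(x)/Q(x))^{\alpha-1}] \le e^{(\alpha-1)(\epsilon - \epsilon')}$. Substituting $\epsilon - \epsilon' = -\frac{\log(1/\delta)}{\alpha-1}$ collapses the exponent to $-\log(1/\delta)$, giving $P(B) \le \delta$. Combining the two pieces, $P(Y) = P(Y \setminus B) + P(Y \cap B) \le e^{\epsilon'} Q(Y) + \delta$, which is precisely the $(\epsilon', \delta)$-DP inequality; since $X, X'$ and $Y$ were arbitrary, the lemma follows.

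I expect the main obstacle to be the bookkeeping between the two equivalent forms of the R\'enyi moment (the $Q$-expectation of the $\alpha$-th power versus the $P$-expectation of the $(\alpha-1)$-th power) and ensuring the Markov inequality points in the right direction, which hinges on $\alpha - 1 > 0$. The measure-theoretic details, namely existence of the Radon--Nikodym derivative and handling the null set where $Q(x) = 0$, are routine but should be stated cleanly so that the ratio $P(x)/Q(x)$ is well defined throughout.
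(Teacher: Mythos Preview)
Your argument is correct and is precisely the standard proof of this conversion (originally Proposition~3 in Mironov's R\'enyi differential privacy paper): the Markov-inequality tail bound on the likelihood ratio, split over the bad set $B$, is exactly how the result is established. Note, however, that the paper itself does not actually supply a proof of this lemma; it merely states it as a known fact from the RDP literature and invokes it downstream in the proof of the ECOS privacy bound. So there is no ``paper's own proof'' to compare against here --- your write-up fills in the omitted standard argument, and does so correctly.
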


To reveal the potential privacy risks, we give a theoretical bound on the privacy cost based on DP in \cref{lm:eocs_privacy}.
The proof of \cref{lm:eocs_privacy} is similar to Theorem 8 in \cite{zhu2020privateknn} without confidence screening.
\begin{lemma}
\label{lm:eocs_privacy}
Suppose the subsampling rate $\gamma$ and noise magnitude $\sigma$ of the ECOS are positive values such that $\gamma \le \min\left\{0.1, \sigma\sqrt{\log(1/\delta)/6} \right\}$ and $\sigma \ge 2\sqrt{5}$.
The total privacy bound of the ECOS scoring $m=|\hat D^q|$ query samples with $n=|\hat D^p|$ private client samples is $(\epsilon, \delta)$-DP with $\delta > 0$ and 
\begin{align}
    \epsilon = \cO(\frac{\gamma}{\sigma} \sqrt{ \log(1/\delta)}),
\end{align}
if $v_r$ in \cref{alg:ours} is estimated by using $\lceil \gamma n \rceil$ samples randomly subsampled from $\hat D^p$ with replacement and is processed by $\tilde v_r = v_r + \cN(0, \sigma^2 I)$.
\end{lemma}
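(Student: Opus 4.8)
The plan is to decompose the ECOS privacy analysis into two standard ingredients and then invoke the RDP-to-DP conversion of \cref{lm:RDP_to_DP}. First I would pin down the \emph{per-query} mechanism: for a fixed centroid $r$, the client computes the count $v_r = |\cC_r^p|$ over a subsample of $\lceil \gamma n \rceil$ points drawn with replacement from $\hat D^p$, then releases $\tilde v_r = v_r + \cN(0,\sigma^2)$. The sensitivity of a single count with respect to the addition/removal of one client record is $1$, so the Gaussian mechanism alone is $(\alpha, \alpha/(2\sigma^2))$-RDP for every order $\alpha > 1$. The subsampling-with-replacement step then amplifies this: by the subsampled-RDP bound of \cite{zhu2019poission} (equivalently the analysis underlying Theorem~8 of \cite{zhu2020privateknn}), under the stated regime $\gamma \le \min\{0.1,\ \sigma\sqrt{\log(1/\delta)/6}\}$ and $\sigma \ge 2\sqrt{5}$ one gets a bound of the form $(\alpha,\ \cO(\gamma^2 \alpha/\sigma^2))$-RDP for a suitable range of orders $\alpha$ — the constants and the admissible $\alpha$-range are exactly what the hypotheses on $\gamma,\sigma$ are engineered to guarantee.

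Next I would compose over the $m = |\hat D^q|$ centroid queries. Because RDP composes additively, scoring all $m$ centroids costs $(\alpha,\ \cO(m \gamma^2 \alpha / \sigma^2))$-RDP. Here I should be slightly careful about what "one query" reveals: each client record contributes to the count of exactly the one centroid whose cluster it falls in, so the releases $\tilde v_1,\dots,\tilde v_R$ are, per record, a single coordinate perturbed by independent Gaussian noise — this is why the composition is over the queries rather than incurring an extra factor, and it mirrors the "vote vector" treatment in \cite{zhu2020privateknn}. Then convert: an $(\alpha,\ \epsilon_{\mathrm{RDP}})$-RDP guarantee yields $(\epsilon_{\mathrm{RDP}} + \log(1/\delta)/(\alpha-1),\ \delta)$-DP by \cref{lm:RDP_to_DP}. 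Substituting $\epsilon_{\mathrm{RDP}} = \cO(m\gamma^2\alpha/\sigma^2)$ and optimizing the free order $\alpha$ — balancing the term linear in $\alpha$ against the $1/(\alpha-1)$ term, which gives $\alpha \asymp \sigma \sqrt{\log(1/\delta)} / (\gamma \sqrt{m})$ up to constants — collapses the bound to $\epsilon = \cO\big(\frac{\gamma \sqrt{m}}{\sigma}\sqrt{\log(1/\delta)}\big)$, and since $m$ is treated as a fixed constant of the protocol this is the claimed $\epsilon = \cO(\tfrac{\gamma}{\sigma}\sqrt{\log(1/\delta)})$.

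The main obstacle I expect is not the Gaussian-mechanism or composition bookkeeping — those are routine — but rather establishing the subsampled-RDP bound cleanly under sampling \emph{with replacement} (as opposed to the Poisson subsampling more commonly analyzed), and verifying that the side conditions $\gamma \le \min\{0.1, \sigma\sqrt{\log(1/\delta)/6}\}$ and $\sigma \ge 2\sqrt{5}$ indeed place the optimizing order $\alpha$ inside the range where the amplification bound is valid. This is precisely the delicate part of Theorem~8 of \cite{zhu2020privateknn}, so I would handle it by reducing to their argument: the ECOS scoring step is the same mechanism as their voting step minus the confidence-screening sub-step (screening only adds privacy cost, so dropping it can only help), hence their bound applies verbatim and the stated hypotheses are inherited directly. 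The remainder is the algebra of choosing $\alpha$ and absorbing the constant $m$, which I would state but not belabor.
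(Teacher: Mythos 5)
Your overall route is the same as the paper's: reduce the score release to a subsampled Gaussian mechanism, import the RDP amplification bound from the private-kNN analysis (Lemma~11 of \cite{zhu2020privateknn}), convert via \cref{lm:RDP_to_DP}, and optimize the order $\alpha$. The constants and the role of the hypotheses also match: $\gamma \le 0.1$ and $\sigma \ge 2\sqrt{5}$ are the validity conditions for the amplification lemma, while $\gamma \le \sigma\sqrt{\log(1/\delta)/6}$ is used at the very end to absorb the $\cO(\gamma^2/\sigma^2)$ term into the $\cO(\gamma\sqrt{\log(1/\delta)}/\sigma)$ term (not, as you suggest, to place $\alpha$ in the admissible range).

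The one genuine gap is your handling of the $m$ queries. The paper does \emph{not} compose over the $R$ centroids: it treats the entire vector $[v_1,\dots,v_R]$ as the output of a \emph{single} subsampled Gaussian mechanism whose sensitivity is $2$ regardless of $R$, precisely because each client record falls into exactly one cluster and therefore perturbs exactly one coordinate. This is why the resulting bound is genuinely independent of $m=|\hat D^q|$ --- a property the paper explicitly highlights after the lemma ("invariant w.r.t.\ the scale of the query dataset"). You state this very fact about single-coordinate influence, but then do not use it: you still apply $m$-fold additive RDP composition, obtain $\epsilon = \cO(\gamma\sqrt{m}\,\sqrt{\log(1/\delta)}/\sigma)$, and dispose of the $\sqrt{m}$ by declaring $m$ a constant. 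That recovers the literal $\cO(\cdot)$ statement only under an artificial reading ($R$ is a tunable compression size, e.g.\ $100$, not a universal constant), and it forfeits the scale-invariance that is the point of the lemma. The fix is exactly the observation you already made: since disjoint coordinates are touched by disjoint records, adding independent Gaussian noise to every coordinate is one vector-valued Gaussian mechanism with the same noise-to-sensitivity ratio as a single count query, so no composition is incurred and the $\sqrt{m}$ factor never appears.
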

\begin{proof}
When one sample is absent from the private client dataset, the scores $[v_1, \dots, v_R]$ will differ by $2$ if without subsampling.
By Lemma 11 of \cite{zhu2020privateknn}, the subsampled Gaussian mechanism accounted by the RDP is
\begin{align*}
    \epsilon(\alpha) \le \frac{24 \gamma^2 \alpha}{\sigma^2}
\end{align*}
for all $0 < \alpha \le \frac{\sigma^2 \log (1/\gamma)}{2}$, $\gamma \le 0.1$ and $\sigma \ge 2\sqrt{5}$.
By \cref{lm:RDP_to_DP}, we can convert the RDP inequality to the standard $(\epsilon, \delta)$-DP as
\begin{align*}
    \epsilon = \frac{24 \gamma^2 \alpha}{\sigma^2} + \frac{\log(1/\delta)}{\alpha - 1}.
\end{align*}
Let $\alpha$ be $1 + \frac{\sqrt{\log(1/\delta)}}{\sqrt{\frac{24 \gamma^2}{\sigma^2}}}$.
Thus,
\begin{align*}
    \epsilon = \frac{24 \gamma^2}{\sigma^2} + 4 \frac{\gamma}{\sigma} \sqrt{6 \log(1/\delta)} \le 8 \frac{\gamma}{\sigma} \sqrt{6 \log(1/\delta)},
\end{align*}
where the last inequality is derived by the given range of $\gamma$. This thus completes the proof. 
\end{proof}
The above bound implies that the privacy cost for our method is invariant w.r.t. the scale of the query dataset $\hat D^q$, and only depends on the DP parameters.
Note that the \cref{lm:eocs_privacy} is an asymptotic bound which requires some strict conditions on $\gamma$ or other variables.
In practice, we leverage the tool of analytic privacy accountant through a numerical method~\cite{wang2019subsampled}, with which we can relax the strict conditions.

\rev{
\subsection{Theoretical Analysis}
\label{sec:app:thm}

Though empirical results show that more accurate models can be trained on ECOS-sampled datasets, it remains unclear how the cloud dataset and the training process affect the model performance on the client.
In this section, we provide the proof of \cref{thm:error_ecos} using the domain generalization bound as stated in \cref{thm:da}.

\begin{theorem}[Domain-adaptation learning bound from \cite{blitzer2007learning}]
\label{thm:da}
    Suppose two domains with distributions $\cD_s$ and $\cD_t$.
    Let $\cH$ be a hypothesis space of $VC$-dimension $d$ and $D_s$ be the dataset induced by samples of size $N$ drawn from $\cD_s$, respectively.
    Then with probability at least $1 - p$ over the choice of samples, for each $f\in \cH$,
    \begin{align}
        L(f, \cD_t) &\le L(f, D_s) + {1\over 2} d_{\cH \Delta \cH} (\cD_{s}, \cD_t) + 4 \sqrt{2d \log (2 N) + \log(4/p) \over N} + \xi,
        \label{eq:da}
    \end{align}
    where $\xi$ is the optimal loss, i.e., $\min_f L(f, \cD_s) + L(f, \cD_t)$, and $d_{\cH \Delta \cH}(\cD_{s}, \cD_t)$ denotes the divergence between domain $s$ and $t$.
\end{theorem}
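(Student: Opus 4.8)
The statement is a restatement of the single-source domain-adaptation bound of Ben-David et al.\ / Blitzer et al.\ \cite{blitzer2007learning}, so the plan is to reconstruct its two-stage proof: first a purely distribution-level inequality obtained from triangle-inequality reasoning together with the definition of the $\cH \Delta \cH$-divergence, and then a uniform-convergence step that replaces the population source error by its empirical counterpart $L(f, D_s)$. Throughout I write $\mathrm{dis}_{\cD}(f, g) := \Ebb_{x\sim\cD}[\mathbf{1}\{f(x)\neq g(x)\}]$ for the expected $0$-$1$ disagreement of two hypotheses on a distribution $\cD$, noting that each population risk $L(f, \cD)$ is the disagreement of $f$ with the (conditional) labeling function of $\cD$, so $L$ is itself a special case of $\mathrm{dis}$.

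The key fact driving the first stage is that $\mathrm{dis}_{\cD}(\cdot,\cdot)$ obeys the triangle inequality on any fixed $\cD$, since it is an expectation of the symmetric, triangle-inequality-satisfying $0$-$1$ indicator. I would introduce the ideal joint hypothesis $f^* \in \argmin_{f\in\cH}\{L(f,\cD_s)+L(f,\cD_t)\}$, whose objective value is exactly $\xi$, and then chain
\begin{align*}
L(f, \cD_t)
&\le L(f^*, \cD_t) + \mathrm{dis}_{\cD_t}(f, f^*) \\
&\le L(f^*, \cD_t) + \mathrm{dis}_{\cD_s}(f, f^*) + \tfrac{1}{2}\, d_{\cH \Delta \cH}(\cD_s, \cD_t) \\
&\le L(f^*, \cD_t) + L(f^*, \cD_s) + L(f, \cD_s) + \tfrac{1}{2}\, d_{\cH \Delta \cH}(\cD_s, \cD_t) \\
&= L(f, \cD_s) + \tfrac{1}{2}\, d_{\cH \Delta \cH}(\cD_s, \cD_t) + \xi.
\end{align*}
The first and third lines are triangle inequalities; the second line is the crux, where I bound $|\mathrm{dis}_{\cD_t}(f,f^*) - \mathrm{dis}_{\cD_s}(f,f^*)|$ by $\tfrac{1}{2} d_{\cH\Delta\cH}(\cD_s,\cD_t)$. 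This uses the definition of the divergence: because both $f$ and $f^*$ lie in $\cH$, the indicator of their disagreement is the characteristic function of a set in the symmetric-difference class $\cH \Delta \cH$, so the difference of its two probabilities is dominated by the supremum (over that class) defining $d_{\cH\Delta\cH}$. The final equality folds $L(f^*,\cD_t)+L(f^*,\cD_s)$ into $\xi$ by definition of $f^*$, giving the distribution-level bound $L(f,\cD_t)\le L(f,\cD_s)+\tfrac{1}{2}d_{\cH\Delta\cH}(\cD_s,\cD_t)+\xi$.

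The second stage is a standard VC uniform-convergence argument. Since $\cH$ has $VC$-dimension $d$, Sauer--Shelah controls its growth function and the usual symmetrization bound yields, with probability at least $1-p$ over the draw of $D_s$ of size $N$, the two-sided estimate $|L(f,\cD_s) - L(f,D_s)| \le 4\sqrt{(2d\log(2N)+\log(4/p))/N}$ uniformly over all $f\in\cH$. Substituting the resulting upper estimate $L(f,\cD_s)\le L(f,D_s)+4\sqrt{(2d\log(2N)+\log(4/p))/N}$ into the distribution-level inequality produces exactly \cref{eq:da}.

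I expect the main obstacle to be conceptual rather than computational, and it lives entirely in the divergence step: one must be careful that the triangle-inequality manipulations are applied to the \emph{symmetric} disagreement $\mathrm{dis}_{\cD}$ and not to an asymmetric risk, and that the pair $(f,f^*)$ genuinely indexes a set in $\cH\Delta\cH$ so that the divergence term is a legitimate upper bound for the disagreement gap; in the agnostic (non-deterministic-label) setting this requires defining risk as disagreement with the conditional label distribution so the triangle inequality still applies. Once the disagreement is set up correctly, the combinatorial chain is immediate and the VC inequality can be invoked as an off-the-shelf black box.
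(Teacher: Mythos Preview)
Your reconstruction is correct and follows the standard Ben-David/Blitzer argument: the triangle-inequality chain through the ideal joint hypothesis $f^*$ to obtain the population-level bound $L(f,\cD_t)\le L(f,\cD_s)+\tfrac12 d_{\cH\Delta\cH}(\cD_s,\cD_t)+\xi$, followed by a VC uniform-convergence step to pass from $L(f,\cD_s)$ to $L(f,D_s)$. The caveat you flag about the agnostic setting (treating risk as disagreement with the Bayes labeling so the triangle inequality remains valid) is exactly the right subtlety.

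Note, however, that the paper does not prove this theorem at all: it is stated as a quoted result from \cite{blitzer2007learning} and used only as a black box in the proof of \cref{thm:error_ecos}. So there is no ``paper's own proof'' to compare against---you have simply supplied the proof the original reference contains, which is more than what was required here.
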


\begin{proof}[Proof of \cref{thm:error_ecos}]
Our proof is mainly based on \cref{thm:da}.
By definition, $S$ contains data sampled from the mixture of distributions, $\sum_{r=1}^R \cD_r^q$. 
Apply \cref{thm:da} to attain
\begin{align}
    L(f_{\theta}, \cD^p) &\le L(f_{\theta}, S) %
        + \frac{1}{2} d_{\cH \Delta \cH} (\sum_{r=1}^R \cD_r^q, \cD^p) + 4 \sqrt{2d \log (2 |S|) + \log(4/p) \over |S|} + \xi. \label{eq:da_S}
\end{align}
And we have
\begin{align}
    d_{\cH \Delta \cH} \left(\sum\nolimits_{r=1}^R \cD_r^q, \cD_t \right) \le \sum\nolimits_{r=1}^R \alpha_r d_{\cH \Delta \cH} (\cS_{r}, \cD_t), \label{eq:div_sum}
\end{align}
which was proved in~\cite{peng2019federated} (proof of Theorem 2).
Plugging \cref{eq:div_sum} into \cref{eq:da_S}, we can get the conclusion.
\end{proof}

Now, we consider $f_\theta$ to be trained by the widely-adopted gradient descent method.
We first present \cref{thm:conv} which characterizes the empirical error bound after $T$ iterations.
Substitute \cref{eq:conv} into \cref{eq:main_bd_0}.
Then we can obtain \cref{thm:error_ecos_convex}.

\begin{theorem}[Convergence bound, e.g., from \cite{boyd2004convex}] \label{thm:conv}
    Suppose the model $f$ is parameterized by $\theta$ initialized as $\theta_0$.
    Let the loss function $L(f_\theta, D)$ be $\mu$-strongly convex and $G$-smooth w.r.t. $\theta$, and assume that the global minima $\theta^*$ exists. Then after $T$ iterations, gradient descent with a fixed learning rate $\eta \le 1/G$ satisfies
    \begin{align}
        L(f_{\theta_T}, D) \le L^* + (1 - \mu/G)^T \ell_0(\theta_0, D) \label{eq:conv}
    \end{align}
    where $L^* = \min_\theta L(f_\theta, D)$ and $\ell_0(\theta_0, D)$ is the initial error gap, i.e., $| L(f_{\theta_0}, D) - L^* |$.
\end{theorem}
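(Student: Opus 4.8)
The plan is to prove \cref{thm:conv} by the standard combination of the \emph{descent lemma} (from $G$-smoothness) with the \emph{Polyak--{\L}ojasiewicz inequality} (from $\mu$-strong convexity), which together yield a one-step contraction in function value that is then unrolled over $T$ iterations. Write $g(\theta) := L(f_\theta, D)$, which is assumed $\mu$-strongly convex and $G$-smooth, set $g^* := L^*$ with minimizer $\theta^*$, and recall the update $\theta_{t+1} = \theta_t - \eta\, \grad g(\theta_t)$.

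First I would establish sufficient decrease. By $G$-smoothness, $g(\theta_{t+1}) \le g(\theta_t) + \langle \grad g(\theta_t), \theta_{t+1}-\theta_t\rangle + \tfrac{G}{2}\norm{\theta_{t+1}-\theta_t}^2$; substituting the update and using $\eta \le 1/G$ gives $g(\theta_{t+1}) \le g(\theta_t) - \eta(1-\tfrac{G\eta}{2})\norm{\grad g(\theta_t)}^2 \le g(\theta_t) - \tfrac{\eta}{2}\norm{\grad g(\theta_t)}^2$. Next I would derive the PL inequality from strong convexity: for any $\theta$ one has $g(\theta') \ge g(\theta) + \langle \grad g(\theta), \theta'-\theta\rangle + \tfrac{\mu}{2}\norm{\theta'-\theta}^2$ for all $\theta'$; minimizing the right-hand side over $\theta'$ (attained at $\theta' = \theta - \mu^{-1}\grad g(\theta)$) and evaluating the left-hand side at $\theta^*$ yields $g^* \ge g(\theta) - \tfrac{1}{2\mu}\norm{\grad g(\theta)}^2$, i.e. $\norm{\grad g(\theta)}^2 \ge 2\mu\,(g(\theta)-g^*)$.

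Combining the two displays and subtracting $g^*$ turns the per-step estimate into $g(\theta_{t+1}) - g^* \le (1-\eta\mu)\,(g(\theta_t)-g^*)$. Choosing the largest admissible step $\eta = 1/G$ gives the contraction factor $1-\mu/G$, and unrolling from $t=0$ to $T-1$ produces $g(\theta_T)-g^* \le (1-\mu/G)^T (g(\theta_0)-g^*)$. Since $g(\theta_0)-g^* \ge 0$, this last quantity equals $(1-\mu/G)^T\,\ell_0(\theta_0,D)$, which is precisely \cref{eq:conv}.

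The only genuinely delicate point is the interplay between the stated condition $\eta \le 1/G$ and the advertised rate: the factor $1-\eta\mu$ is decreasing in $\eta$, so the tight factor $1-\mu/G$ is attained exactly at the boundary $\eta = 1/G$, whereas strictly smaller steps still satisfy the sufficient-decrease inequality but yield only the weaker factor $1-\eta\mu \ge 1-\mu/G$. I would therefore state the rate for $\eta = 1/G$, treating the condition $\eta \le 1/G$ as guaranteeing validity of the descent lemma. Beyond this bookkeeping the argument is entirely routine, so I do not anticipate any real obstacle.
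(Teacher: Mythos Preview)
Your argument is correct and is the standard route (descent lemma from $G$-smoothness plus the PL inequality from $\mu$-strong convexity, then unrolling the contraction). Note that the paper does not actually supply its own proof of this statement: it is quoted as a known result from \cite{boyd2004convex} and used as a black box to derive \cref{thm:error_ecos_convex}. Your observation about the step size is also apt: the stated rate $(1-\mu/G)^T$ is attained only at $\eta = 1/G$, while the hypothesis $\eta \le 1/G$ merely ensures the descent lemma applies; this is a minor imprecision in the theorem statement rather than a flaw in your proof.
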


\begin{lemma}\label{thm:error_ecos_convex}
Suppose assumptions in \cref{thm:error_ecos,thm:conv} holds.
Let $L(\cdot, \cdot)$ be a loss function on a hypothesis and a dataset (for empirical error) or a distribution (for generalization error).
If $f$ is governed by the parameter $\theta$ trained on $S$ via $T$-iteration gradient descent and belongs to a hypothesis space $\cH$ of $VC$-dimension $d$,
then with probability at least $1 - p$ over the choice of samples, 
the inequality holds,
    \begin{align}
        L(f_{\theta_T}, \cD^p) &\le 
        (1 - \frac{\mu}{G})^T \ell_0(\theta_0, S) 
        + \frac{1}{2} \sum\nolimits_{r=1}^R \alpha_r d_{\cH \Delta \cH} (\cD^q_{r}, \cD^p) \notag \\
        &\quad + 4 \sqrt{2d \log (2 |S|) + \log(4/p) \over |S|} + \xi', \label{eq:main_bd_convex}
    \end{align}
    where $\xi' = \min_\theta \left\{ L(f_\theta, S) + L(f_\theta, D^p) \right\} + \min_\theta L(f_\theta, S)$, and $d_{\cH \Delta \cH}(\cD, \cD')$ denotes the distribution divergence.
\end{lemma}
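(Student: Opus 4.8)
The plan is to combine the already-established domain-adaptation bound (\cref{thm:error_ecos}) with the convergence bound for gradient descent (\cref{thm:conv}) by a straightforward substitution argument. First I would invoke \cref{thm:error_ecos} directly, which gives, with probability at least $1-p$ over the choice of samples,
\begin{align*}
    L(f_{\theta_T}, \cD^p) &\le L(f_{\theta_T}, S) + \frac{1}{2} \sum\nolimits_{r=1}^R \alpha_r d_{\cH \Delta \cH}(\cD^q_r, \cD^p) + 4\sqrt{\frac{2d\log(2|S|) + \log(4/p)}{|S|}} + \xi,
\end{align*}
where $\xi = \min_\theta \{ L(f_\theta, S) + L(f_\theta, D^p)\}$. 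This already controls the generalization error in terms of the empirical error $L(f_{\theta_T}, S)$ of the trained model on the sampled set $S$, plus the divergence and complexity terms, none of which depend on the optimization procedure. The point of the lemma is to make the dependence on the training budget $T$ explicit under the additional strong convexity and smoothness assumptions.

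Next I would apply \cref{thm:conv} to the empirical loss $L(f_\theta, S)$, identifying the dataset $D$ in that theorem with $S$. Since the loss is assumed $\mu$-strongly convex and $G$-smooth with respect to $\theta$, a fixed learning rate $\eta \le 1/G$ yields $L(f_{\theta_T}, S) \le L^*_S + (1 - \mu/G)^T \ell_0(\theta_0, S)$, where $L^*_S = \min_\theta L(f_\theta, S)$ and $\ell_0(\theta_0, S) = |L(f_{\theta_0}, S) - L^*_S|$. Substituting this upper bound for $L(f_{\theta_T}, S)$ into the right-hand side of the domain-adaptation inequality, and absorbing the extra term $L^*_S = \min_\theta L(f_\theta, S)$ into the constant, gives
\begin{align*}
    L(f_{\theta_T}, \cD^p) &\le (1-\tfrac{\mu}{G})^T \ell_0(\theta_0, S) + \frac{1}{2}\sum\nolimits_{r=1}^R \alpha_r d_{\cH\Delta\cH}(\cD^q_r, \cD^p) + 4\sqrt{\frac{2d\log(2|S|) + \log(4/p)}{|S|}} + \xi',
\end{align*}
with $\xi' = \min_\theta \{ L(f_\theta, S) + L(f_\theta, D^p)\} + \min_\theta L(f_\theta, S)$, which is precisely \cref{eq:main_bd_convex}. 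The probability statement is inherited verbatim from \cref{thm:error_ecos}, since \cref{thm:conv} is a deterministic statement about the optimization trajectory and introduces no new randomness.

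This proof is essentially bookkeeping: the two component results are both stated in the excerpt and simply need to be chained together, so there is no genuine obstacle. The only point requiring a little care is the accounting of the constant terms — making sure that the optimal empirical loss $L^*_S$ appearing in \cref{thm:conv} is carried over into $\xi'$ rather than silently dropped, so that the stated inequality is literally correct and not merely correct up to an additive constant. One should also note that \cref{thm:conv} requires the global minimizer $\theta^*$ of $L(f_\theta, S)$ to exist and the learning rate to satisfy $\eta \le 1/G$; these are exactly the hypotheses imported from \cref{thm:conv} into the lemma, so no additional assumptions are needed.
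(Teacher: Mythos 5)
Your proposal is correct and follows essentially the same route as the paper: the paper likewise obtains the lemma by substituting the gradient-descent convergence bound of \cref{thm:conv} (with $D=S$) into the first term of \cref{eq:main_bd_0} from \cref{thm:error_ecos}, absorbing the optimal empirical loss $\min_\theta L(f_\theta,S)$ into $\xi'$. Your explicit bookkeeping of that constant and the remark that \cref{thm:conv} adds no new randomness are consistent with (and slightly more careful than) the paper's one-line derivation.
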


}

\subsection{Social Impacts}
\label{sec:app:dicussion}

The conflict between the concerns on data privacy and demands for intensive computation resources for machine learning has composed the main challenge in training outsourcing.
In this work, we devote our efforts to outsourcing with uploading data by leveraging authorized or public datasets.
As the public datasets commonly available in many applications are collected from multiple data sources and thus tend to be non-identically distributed as the client data, it casts new challenges to use the public in place of the client dataset.
Our method addresses this problem with accountable privacy cost and low communication and computation complexity. Therefore, the proposed ECOS provides a promising solution to mitigate the aforementioned conflict between the privacy and computation desiderata.
Therefore, users from a broader spectrum can benefit from such a method to confidentially conduct cloud training.

\subsection{Connection to Federated Learning}

\rev{Both our method and federated learning (FL)~\cite{mcmahan2017communication} consider protecting data privacy via not sharing data with the cloud. 
The key difference between FL and our concerned problem (training outsourcing) is that FL requires clients to conduct training while ECOS outsources the training to the cloud server. 
Since ECOS does not require local training, it can be ad-hocly plugged into FL to obtain an auxiliary open-source dataset for enhancing the federated training. 
ECOS can be used either before federated training (when a pre-trained model is required) or during federated training (when the pre-trained model can be replaced by the on-training model).}

\section{Experimental Details and More Experiments}
\label{sec:app:exp}

Complementary to the main content, we provide the details of the experiment configurations to merit the reproducibility.
We also conduct more qualitative experiments to understand the efficiency and effectiveness of the proposed method.

\subsection{Experimental Details}
\label{sec:exp_details}

We organize the case-specified configurations into three cases and discuss the general setups first.

For Digits, we train the model for 150 epochs.
We adopt a convolutional neural network for Digits in \cref{tab:conv_model} and ResNet18 for DomainNet.
To solve the learning problems including FixMatch and distillation-based compression, we use stochastic gradient descent with the momentum of 0.9 and the weight decay of $5\times 10^{-4}$.
We use $s=5$ for the scale function $\psi_s$ on DomainNet and $s=1$ on Digits.
When not specified, we noise the ECOS query with the magnitude as $25$.
In Case 1 and 2, we reduce the noise magnitude to $10$ for DomainNet, since the two queries can bear more privacy costs to trade for higher accuracy.

\textbf{Case 1: Selective manual labeling}.
We make use of the off-the-shelf ResNet18 is pre-trained on the ImageNet, which is widely accessible online.
We adopt FixMatch for semi-supervised learning with the coefficient of $0.1$ on the pseudo-labeled loss, the moving average factor of $0.9$, and the batch size of 64 for DomainNet and 128 for Digits.
To avoid feature distorting, we warm up the fine-tuning by freezing all layers except the last linear layer with a learning rate of 0.01. 
After 30 epochs, we fine-tune the model end to end until 80 epochs to avoid overfitting biased data distributions.

\textbf{Case 3: Adaptive model compression}.
We first pre-train a ResNet50 using all labeled open-source data for 100 epochs with a cosine-annealed learning rate from $0.1$.
The same warm-up strategy as Case 1 is used here.
To extract the knowledge from ResNet50, we combine the knowledge-distillation (KD) loss $L_{KD}$ and cross-entropy loss $L_{CE}$ by $0.1 \times L_{KD}+0.9 \times L_{CE}$ and calculate the losses on the selected samples only.
The temperature in the KD loss is set to be $10$.

\textbf{Case 2: Automated client labeling}.
For the cloud training, we adopt the same configuration as the selective manual labeling.
For private kNN, we let the client release 600 labels with class-wise confidence thresholds described in the last section.
We noise the labeling in the magnitude of $25$ and the confidence in the magnitude of $75$.
For both datasets, we subsample $80\%$ client data per labeling query to reduce the privacy cost.

\begin{table}[ht]
    \caption{The structure of the conventional neural network for the Digits dataset.}
    \label{tab:conv_model}
    \centering
    \tiny
    \begin{tabular}{cc}
        \toprule
        Layer name & PyTorch pseudo code \\
        \midrule
        conv1 & Conv2d(1, 64, kernel\_size=(5, 5), stride=(1, 1)) \\
        bn1 & BatchNorm2d(64, eps=1e-05, momentum=0.1) \\
        conv1\_drop & Dropout2d(p=0.5, inplace=False) \\
        conv2 & Conv2d(64, 128, kernel\_size=(5, 5), stride=(1, 1)) \\
        bn2 & BatchNorm2d(128, eps=1e-05, momentum=0.1) \\
        conv2\_drop & Dropout2d(p=0.5, inplace=False) \\
        fc1 & Linear(in\_features=2048, out\_features=384, bias=True) \\
        fc2 & Linear(in\_features=384, out\_features=192, bias=True) \\
        fc3 & Linear(in\_features=192, out\_features=11, bias=True) \\
        \bottomrule
    \end{tabular}
\end{table}

We conduct our experiments on the Amazon Web Service platform with 4 Tesla T4 GPUs with 16GB memory and a 48-thread Intel CPU.
All the code is implemented with \texttt{PyTorch} 1.11.
To account for the privacy cost, we utilize the open-sourced \texttt{autodp} package following the private kNN.

\subsection{Effect of Parameters}
To better understand the proposed method, we study the effect of the important hyper-parameters.
To this end, we consider the selective labeling task with Digits, keeping $50\%$ of the SVHN dataset at the client end.
Both the ID+OoD and OoD cases are evaluated to reveal the method's effectiveness under circumstances with various hardness.
Also, we study how the score scale $s$ affects the ID ratios (denoted as the ID TPR) in the selected set and the number of effective samples.
We only examine the ID TPR corresponding to the proximity objective in \cref{eq:ecos_obj} if known ID samples are present on the cloud, namely in the \emph{ID+OoD} case.
In the middle panes of \cref{fig:Digits_alabel_vary_compress_k,fig:Digits_alabel_vary_vote_scale}, we show that our method can effectively improve the ID TPR against the inherent ratio of ID samples on the cloud.

\begin{figure}[ht]
    \centering
    \includegraphics[width=0.39\textwidth]{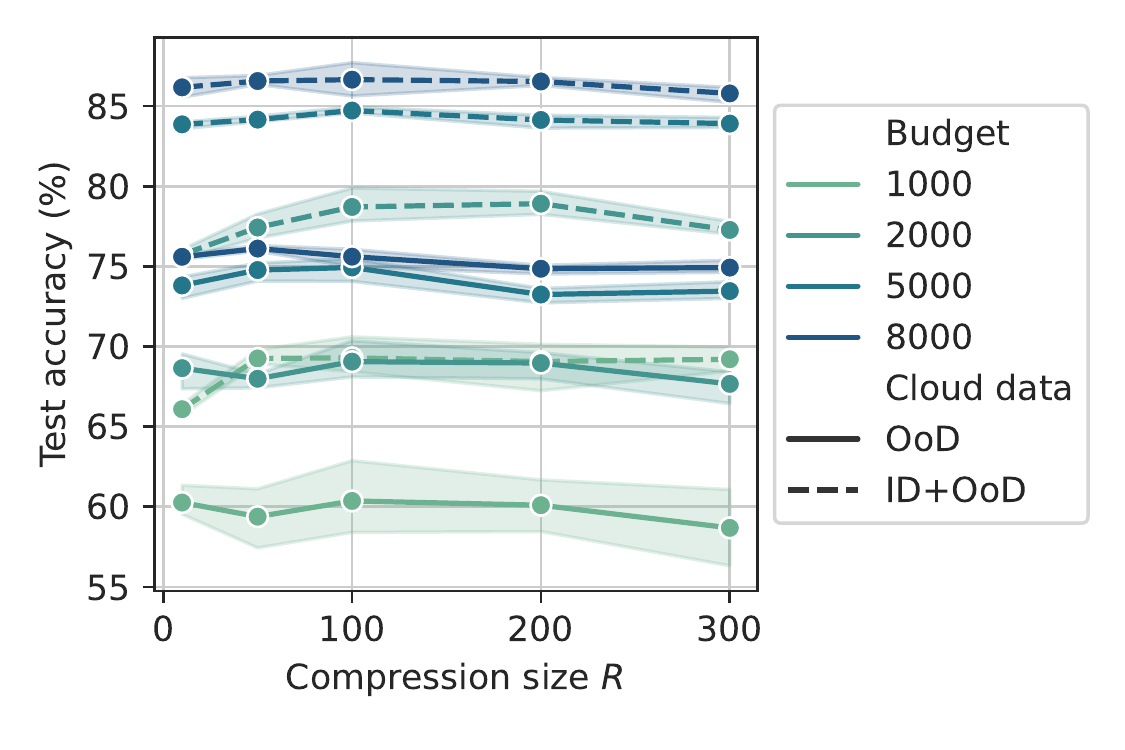} \hfill
    \includegraphics[width=0.29\textwidth]{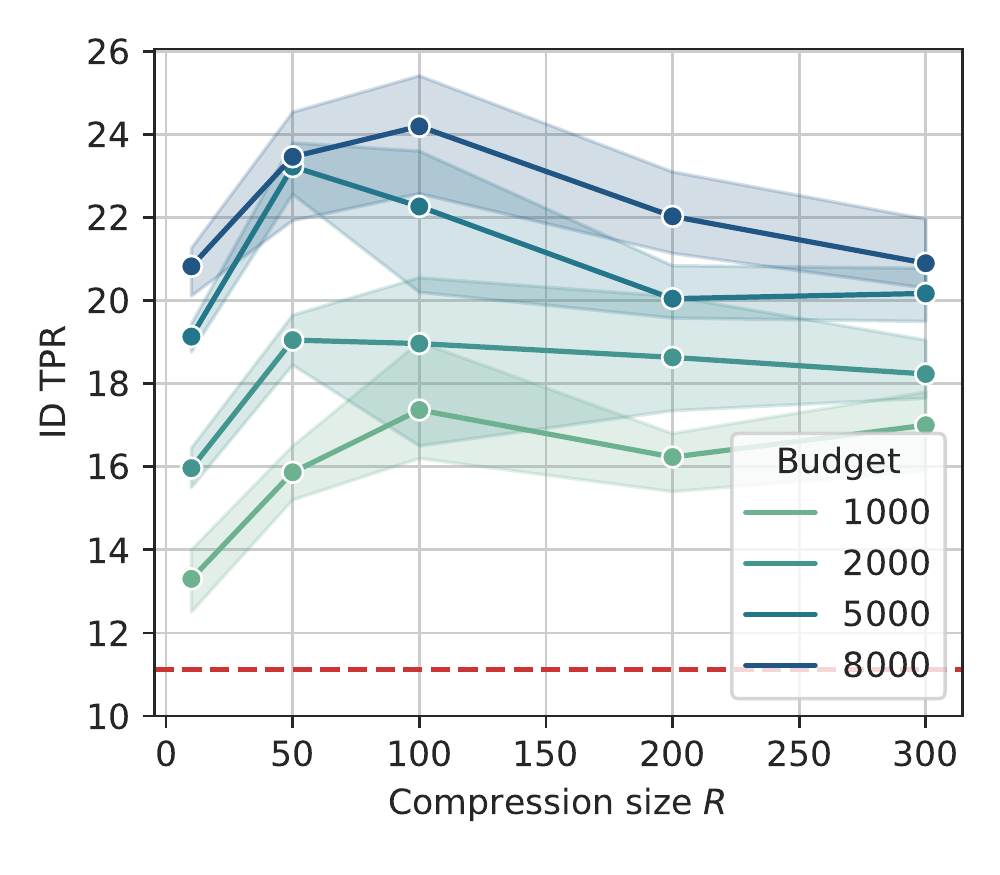} \hfill
    \includegraphics[width=0.29\textwidth]{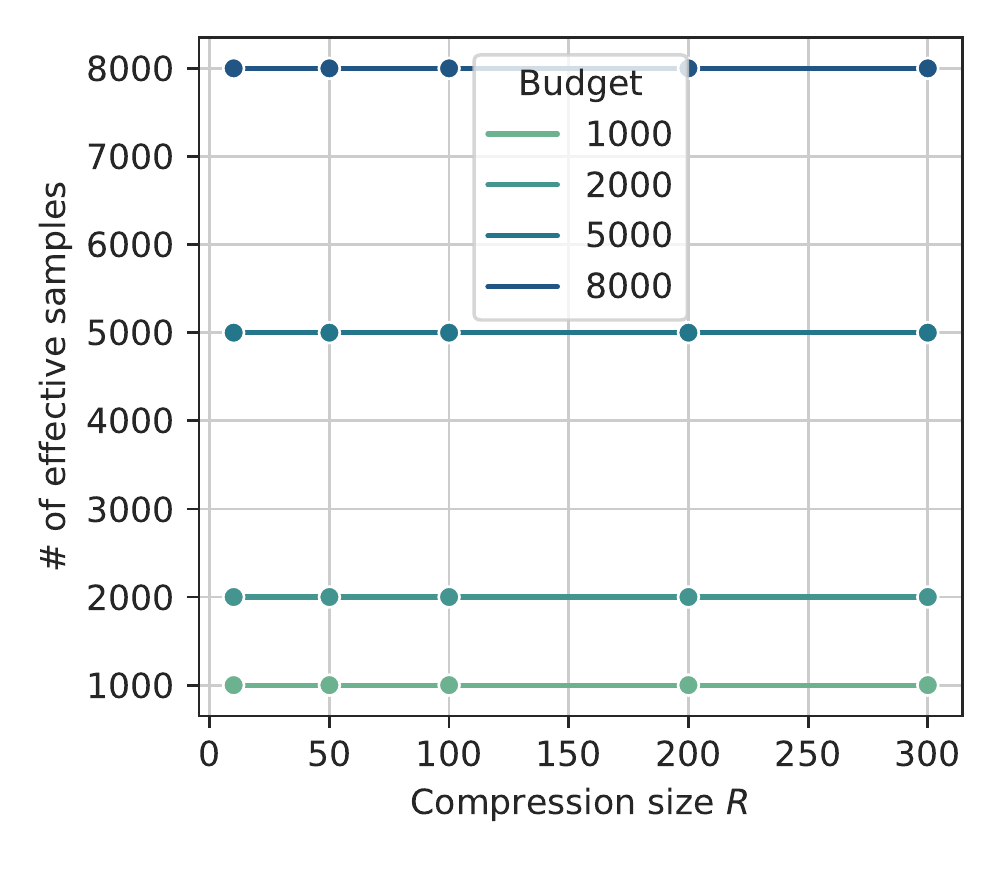}
    \caption{Vary the compression size $R$ and evaluate the test accuracy, ID ratios (\%) in selected samples and the number of effectively selected samples. The red horizontal line indicates the ratio of ID samples in the whole cloud dataset.}
    \label{fig:Digits_alabel_vary_compress_k}
\end{figure}

\textbf{Effect of the compression size $R$}.
In \cref{fig:Digits_alabel_vary_compress_k}, we evaluate $R$ in terms of the test accuracy.
When the budget is small (1k and 2k budgets in the ID+OoD case), it is essential for the cloud server to sense the client distribution with higher accuracy via more queries.
Therefore, a larger $R$ is desired, which can increase the portion of ID samples in the selected set, as shown in the middle pane of \cref{fig:Digits_alabel_vary_compress_k}.
Considering that a higher burden on communication comes with a larger $R$, the value of $100$ leads to a fair trade-off to the accuracy in which case the ID TPR reaches a peak.

Given a larger budget, e.g., 8000, increasing $R$ may lower the ID TPR.
We attribute the decline to the limited size of the client dataset and privacy constraints.
Given more clusters (i.e., $R$), the expected number of votes (proportional to the score) for each cluster will be reduced and is badly blurred by the DP noise.
Thus, the ID TPR will decrease simultaneously, regardless of which the test accuracy is not significantly affected.

For the OoD case which is relatively harder for sampling due to the lack of true ID data, the parameter sensitivity is weakened, though the compression size of $100$ is still a fair choice, for example, bringing in $1-2\%$ gains in the 5k, 8k cases comparing the worst cases.

\begin{figure}[ht]
    \centering
    \includegraphics[width=0.39\textwidth]{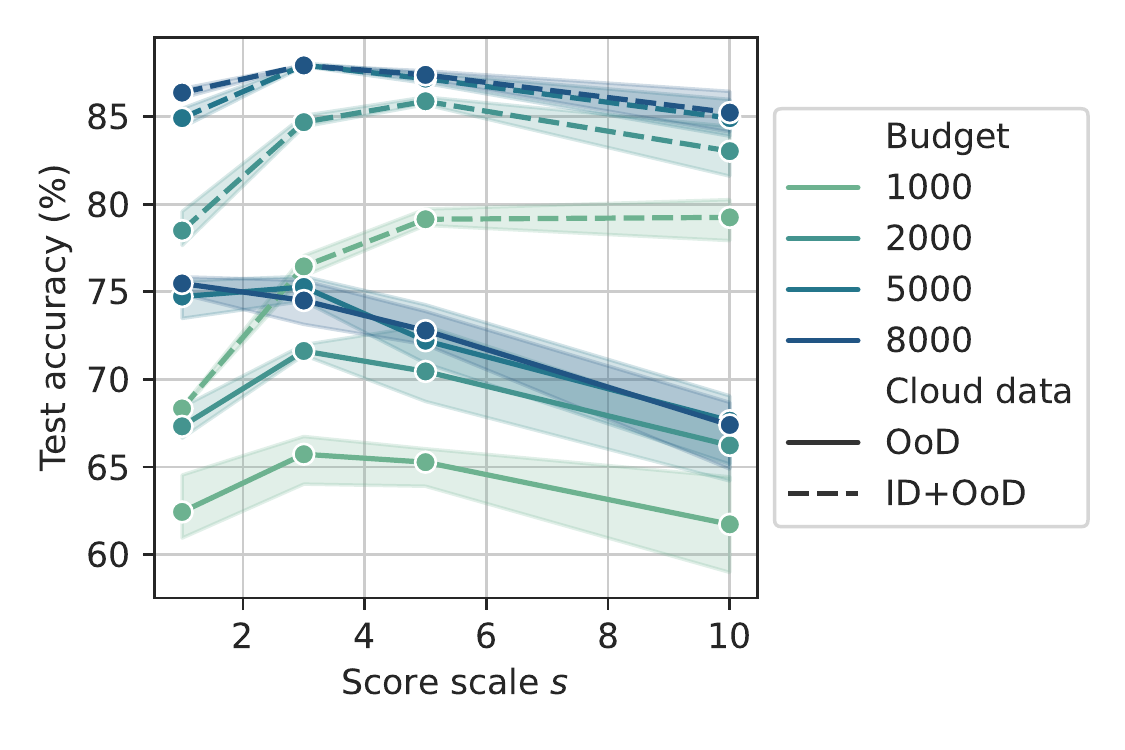} \hfill
    \includegraphics[width=0.29\textwidth]{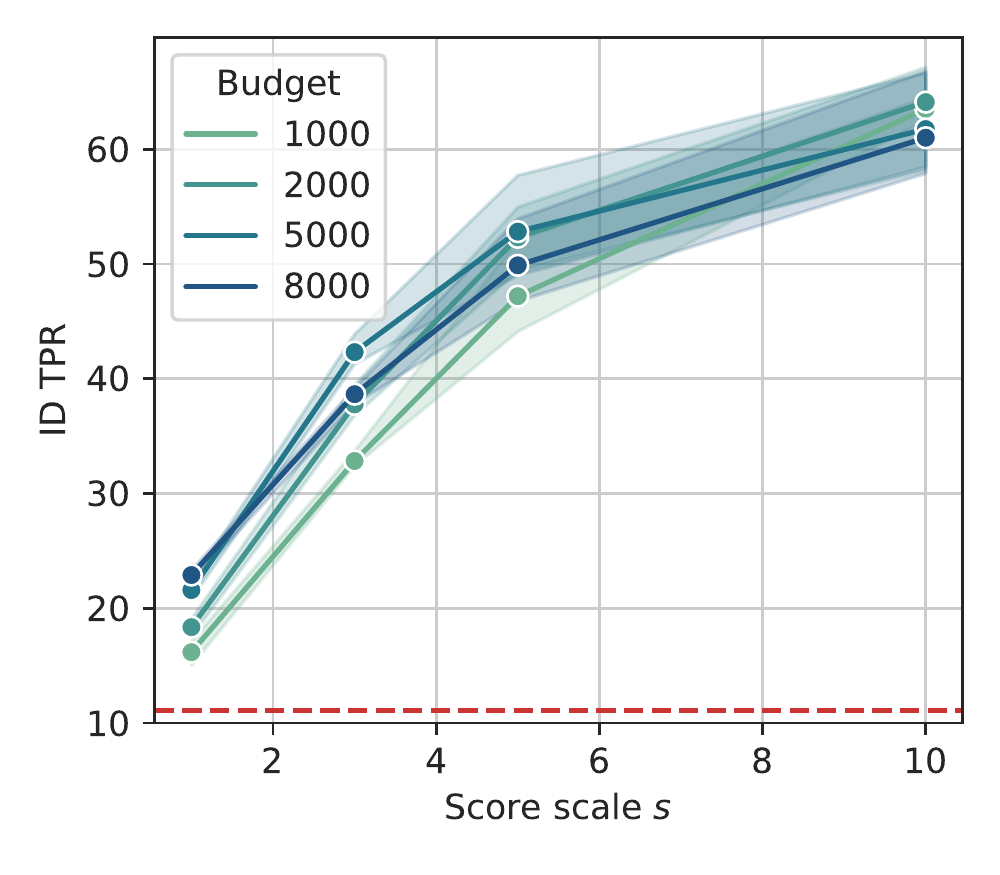} \hfill
    \includegraphics[width=0.29\textwidth]{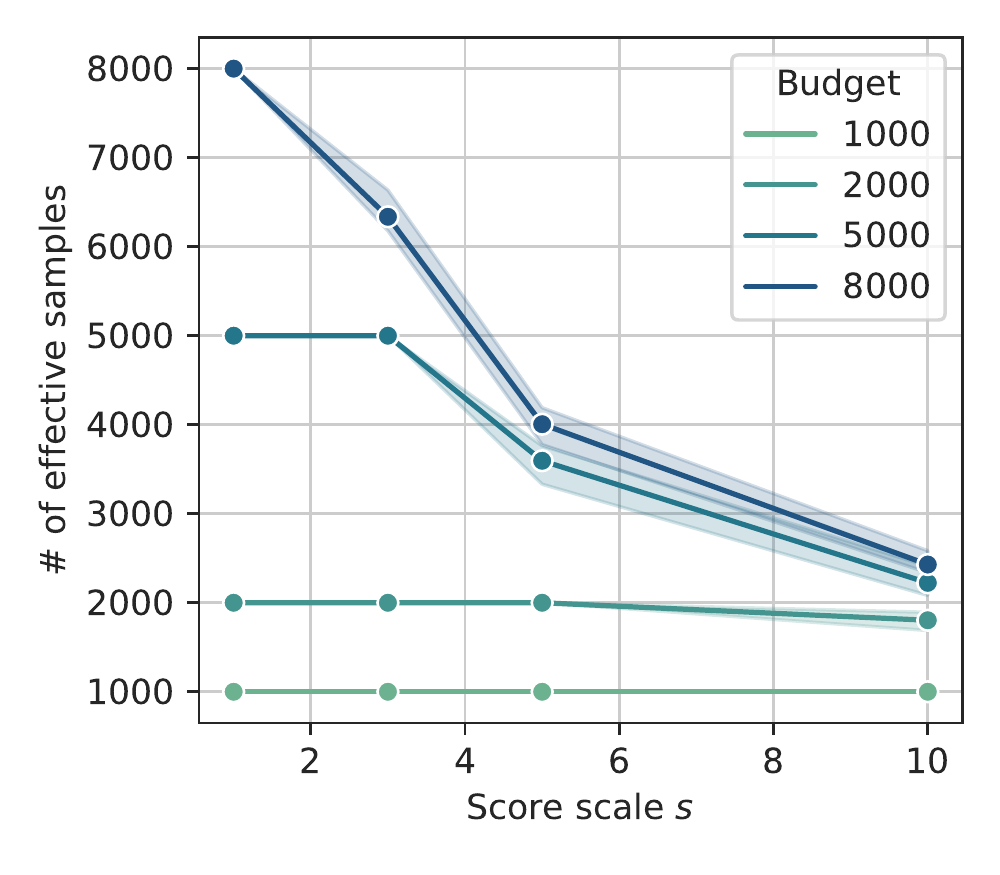}
    \caption{Vary the score scale $s$ in terms of test accuracy, ID ratios (\%) in selected samples and the number of effectively selected samples (which could be smaller than the budget). The red horizontal line indicates the ratio of ID samples in the whole cloud dataset.}
    \label{fig:Digits_alabel_vary_vote_scale}
\end{figure}

\textbf{Effect of the score scale $s$}.
The score scale $s$ decides the sensitivity of the sampling in the sense of proximity.
A larger $s$ means that the ECOS will prioritize the proximity more during sampling.
In \cref{fig:Digits_alabel_vary_vote_scale}, we present the ablation study of $s$.
A larger $s$ is preferred when the budget becomes limited because it increases the ID TPR effectively.
Though not significantly, an overly large $s$ has a significantly negative influence on the accuracy, especially for the OoD case.
The reason for the negative impact of $s$ on a large budget can be understood by probing the number of effective samples.
For budgets larger than 2000, the effectively selected samples are reduced with heavily scaled scores (e.g., $s\ge 3$) where the ECOS will concentrate its selection into very few clusters and eliminate the rest clusters strictly.

\subsection{Evaluation of Sample and Privacy Efficiency}
\label{sec:app:smp_eff}

\textbf{Effects of sample budgets.}
In \cref{fig:Digits_alabel_vary_budget}, we compare the sample efficiency in the selective labeling task with Digits, keeping $50\%$ of the SVHN dataset at the client end.
We obtain the upper-bound accuracy in the ideal case via random sampling when the cloud dataset distributes identically (ID) as the client dataset.
When OoD data are included in the open-source cloud dataset (ID+OoD), the training becomes more demanding for the labeled samples.
If none of the iid samples presents in the cloud set (OoD), the accuracy decreases quickly with the same labeled samples.
In comparison, informative sampling by K-Center slightly improves the accuracy by different budgets and the proposed ECOS significantly promotes the sample efficiency.
With ECOS, $8\times 10^3$ labeled samples in the ID+OoD case achieves comparable accuracy as the ideal case, while baselines remain large gaps.
Both in ID+OoD and OoD cases, our method yields competitive accuracy (at the $4\times 10^3$ budget) versus the best baseline results using only half of the labeled data (at the $1\times 10^4$ budget), dramatically cutting down the cost for manual labeling.

\begin{wrapfigure}{r}{0.45\textwidth}
    \vspace{-0.4in}
    \centering
    \includegraphics[width=0.43\textwidth]{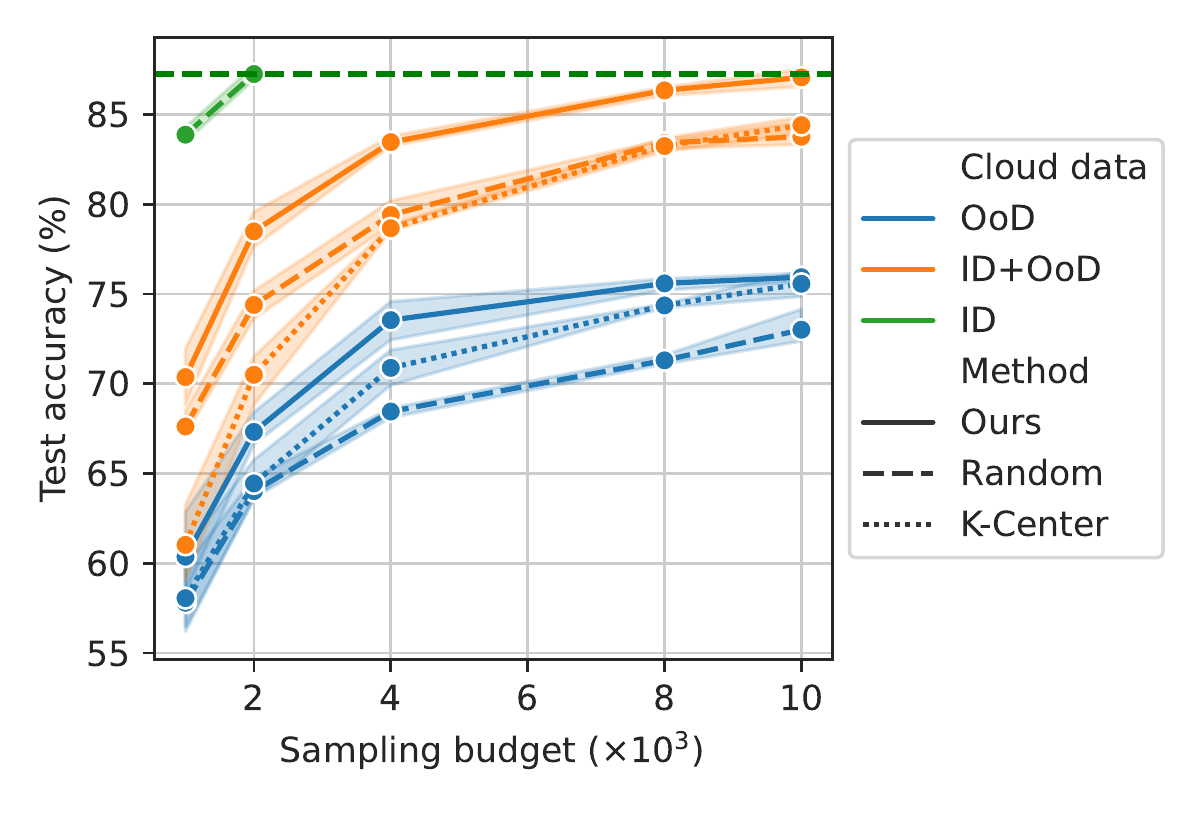}
    \vspace{-0.1in}
    \caption{Evaluation of the sample efficiency on selective labeling. The green horizontal line implies the ideal case when all ID cloud data are labeled.}
    \label{fig:Digits_alabel_vary_budget}
    \vspace{-0.2in}
\end{wrapfigure}

On observing the gains in sample efficiency, readers may also notice that our method induces additional costs at privacy, as compared to the baselines.
We point out that the cost is constant w.r.t. the sampling budget and is as neglectable as $(0.22, 10^{-5})$-DP.
It is worth noticing that the cost is independent of the hyper-parameters of the ECOS because the ECOS communication is a \emph{single} query for each private sample (so as for the private dataset), even if we increase the size of the query set (i.e., the compression size $R$).
In practice, the client can control the privacy risk (namely, the privacy cost) flexibly by adjusting the noise magnitude and the subsampling rate.

\begin{wrapfigure}{r}{0.45\textwidth}
    \vspace{-0.2in}
    \centering
    \includegraphics[width=0.43\textwidth]{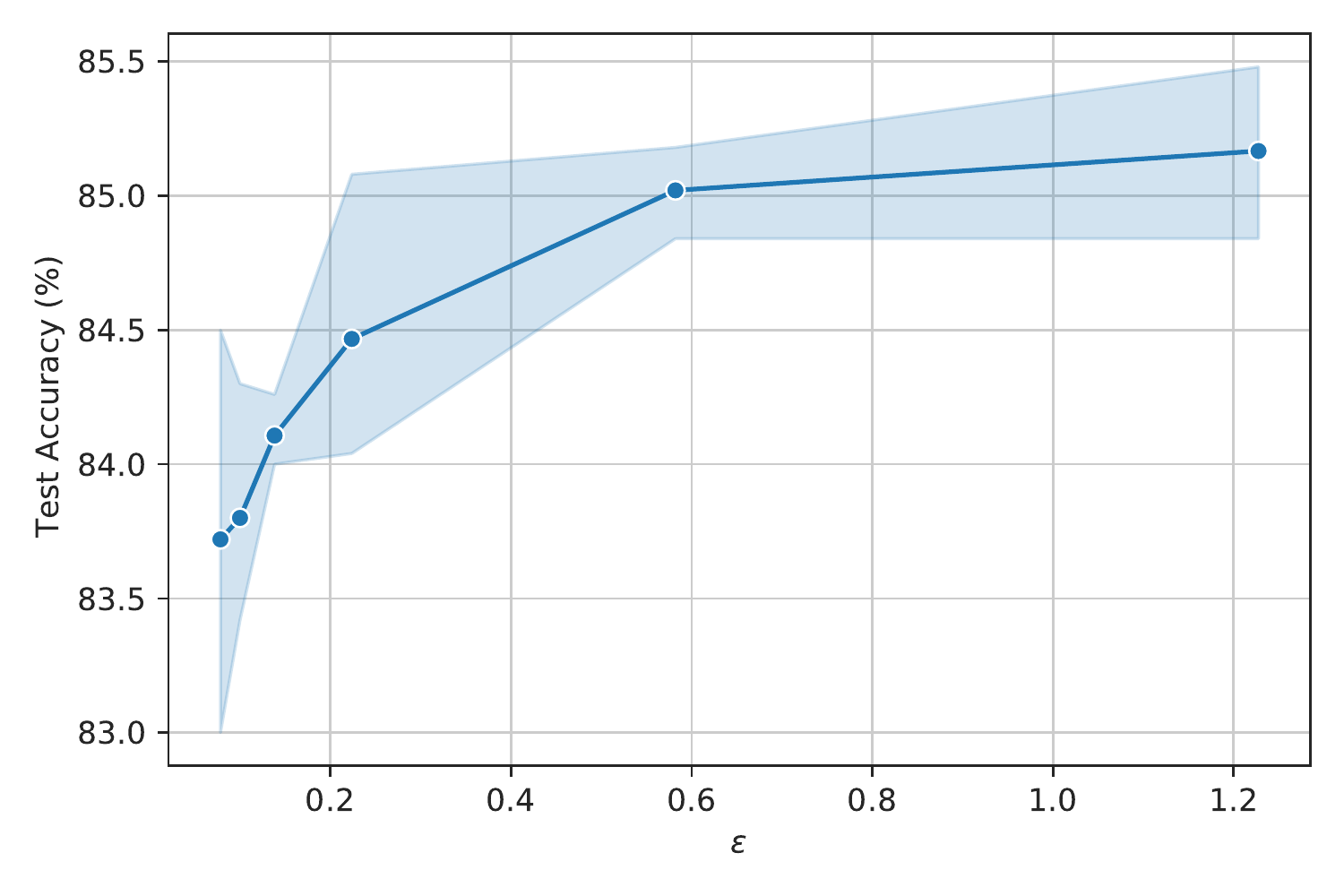}
    \vspace{-0.1in}
    \caption{Evaluation on how the privacy costs affect the performance.}
    \label{fig:acc_eps}
    \vspace{-0.3in}
\end{wrapfigure}

\textbf{Privacy-accuracy trade-off.}
Additionally, we evaluate how the privacy costs affect the 
In \cref{fig:acc_eps}, we study the relationship between privacy cost and performance by varying the noise scale $\sigma$ of the ECOS in label outsourcing. The experiment is conducted on SVHN client data for label outsourcing. Interestingly, the sampling effectiveness is not very sensitive to the noise scale. A very low privacy cost ($0.08$) can be achieved with noise as large as 70 and accuracy as high as $83.7\%$. The success could be attributed to the low dimension of queries (100 clusters) to the private dataset, resulting in the efficiency of privacy costs.

\subsection{Evaluation of Communication and Computation Efficiency}
\label{sec:app:eff}

When improving the sample efficiency, we also need to take care of the communication and computation overheads brought by the ECOS.
We examine the two kinds of efficiency by the same experiment configurations as in the last section.

\begin{figure}[ht]
    \centering
    \vspace{-0.2in}
    \begin{subfigure}[b]{0.45\textwidth}
        \includegraphics[width=\textwidth]{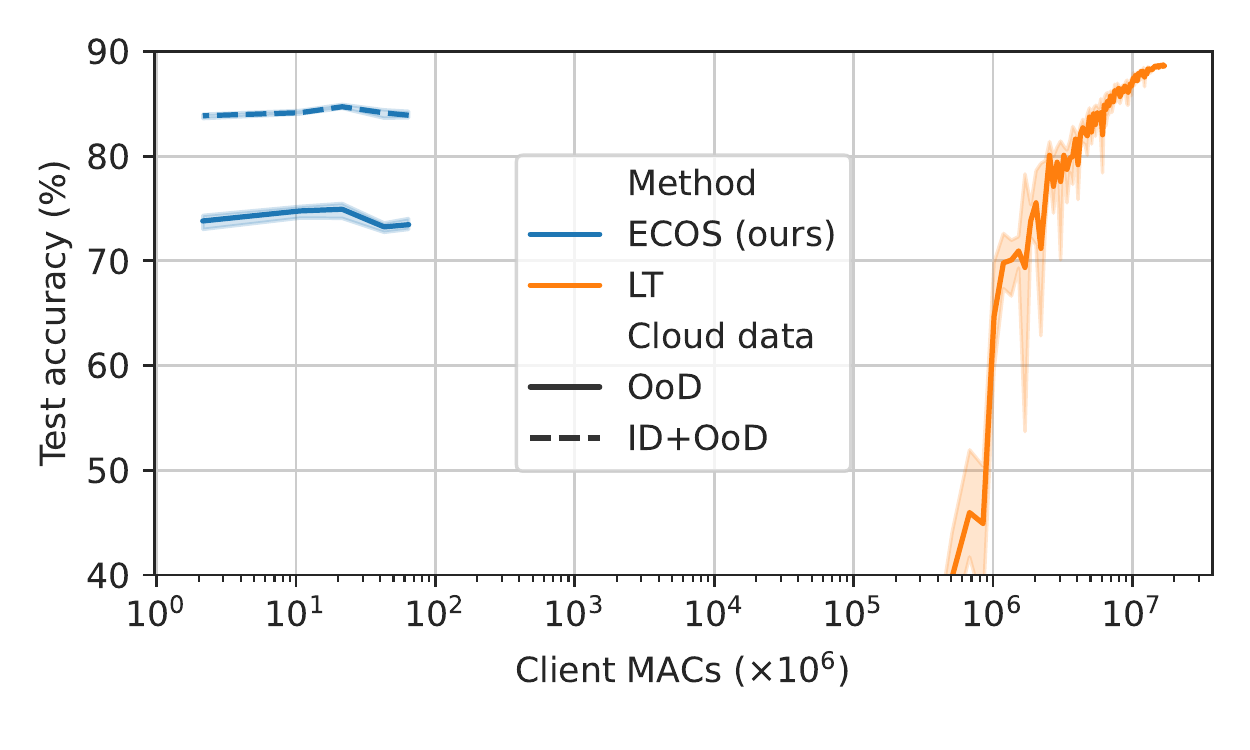}
        \vspace{-0.3in}
        \caption{Digits}
    \end{subfigure}
    \begin{subfigure}[b]{0.45\textwidth}
        \includegraphics[width=\textwidth]{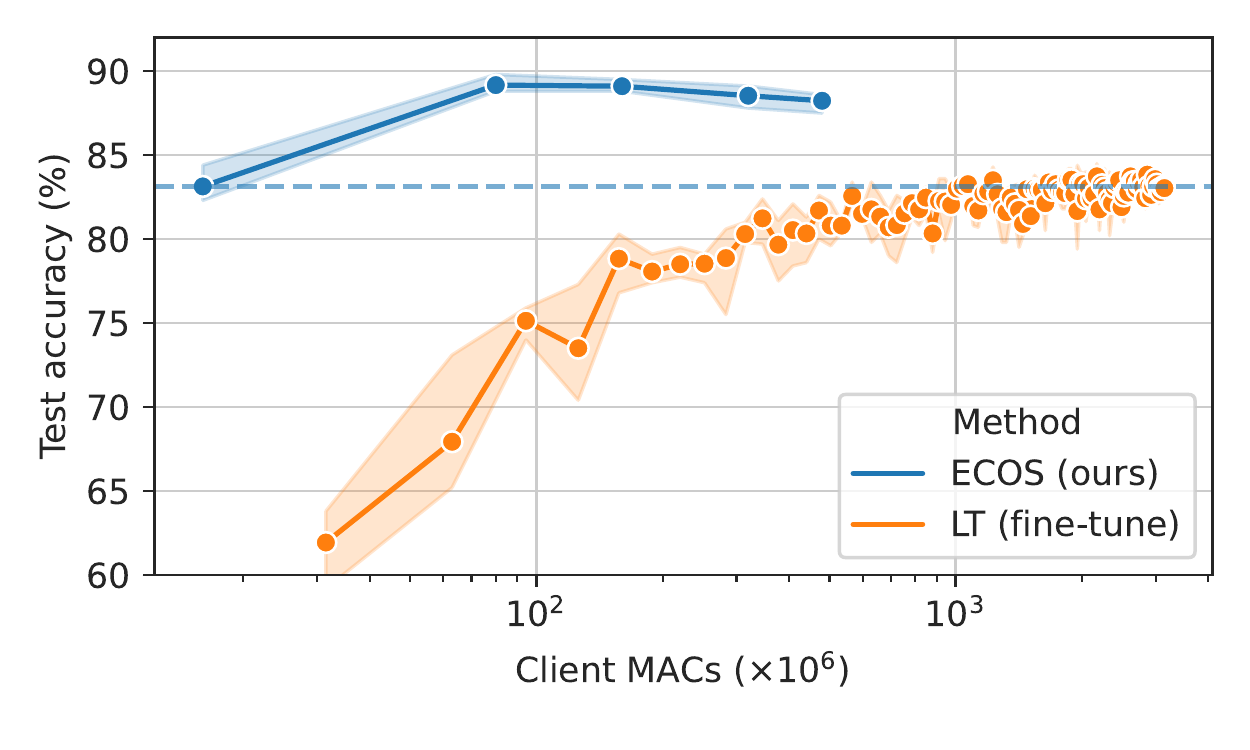} %
        \vspace{-0.3in}
        \caption{DomainNet}
    \end{subfigure}
    \caption{With the $5000$ budget, we evaluate the computation efficiency. The efficiency of locally-training (LT) on the DomainNet is enhanced by linear fine-tuning where only the linear classifier head is locally trained.
    }
    \label{fig:Digits_alabel_acc_v_comp_eff_vary_compress_k}
    \vspace{-0.1in}
\end{figure}

\begin{wrapfigure}{r}{0.4\textwidth}
    \vspace{-0.2in}
    \centering
    \includegraphics[width=0.3\textwidth]{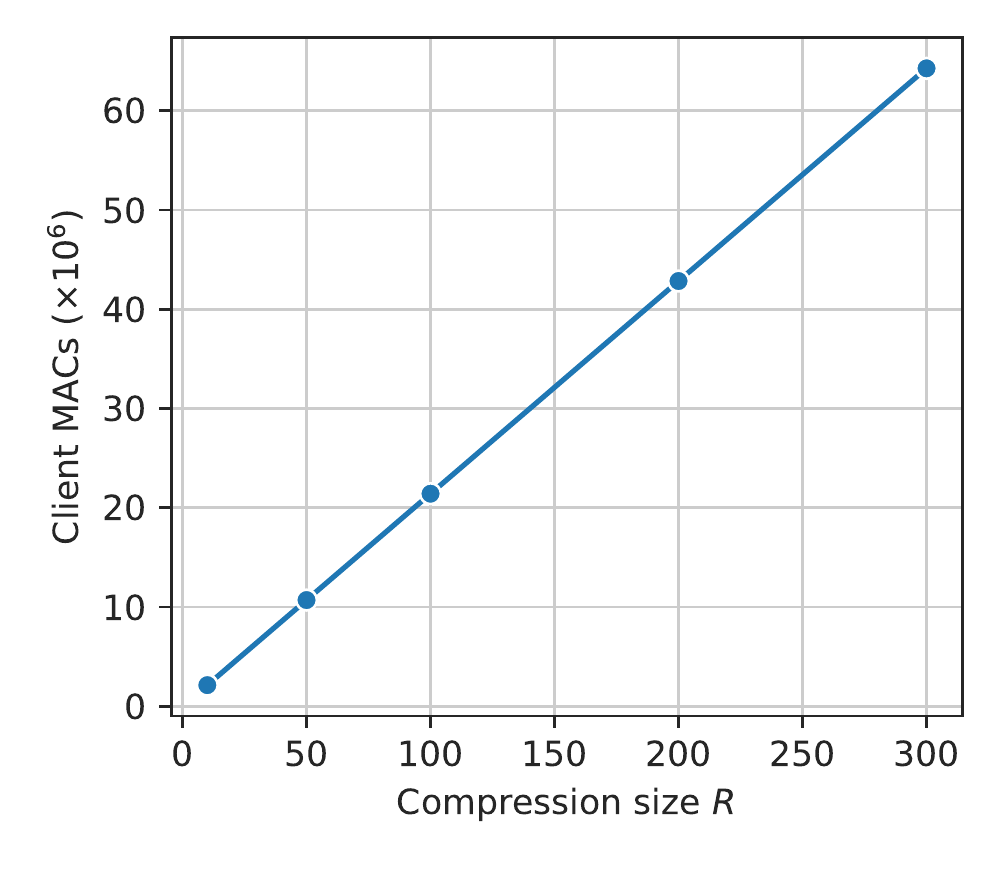}
    \caption{The linearly growing computation cost by increasing the compression size on the Digits dataset.}
    \label{fig:compress_k_comp_eff}
    \vspace{-0.2in}
\end{wrapfigure}

\textbf{Computation efficiency.}
In \cref{fig:Digits_alabel_acc_v_comp_eff_vary_compress_k}, we compare the computation efficiency of our method to the local training (LT).
We utilize the multiplication-and-addition counts (MACs) as the metric of computation (time) complexity, which is hardware-agnostic and therefore is preferred here.
For a fair comparison, we tune the learning rate in $\{0.1, 0.01, 0.001\}$ with the cosine annealing
during training and the number of epochs in $\{20, 50, 100\}$ of the LT to achieve a fair trade-off between the computation cost and test accuracy.
For ECOS, since the computation cost linearly increases by the compression size 
(as shown in \cref{fig:compress_k_comp_eff}), we vary the compression size to check the performance when increasing computation costs.
On Digits, we observe a large computation save by our method, even if the cost of our method will gradually increase by the size $R$ of the compressed query set.

Similar experiments are also run on the large-sized images using the DomainNet dataset (ID+OoD case), where the cost for extracting features is steeply increased by using a deep network (ResNet18).
Recently, the most popular strategy for cloud training is two-phase learning: pre-training a model on the cloud using ImageNet and fine-tuning the linear classifier head on the client.
Considering the large cost of feature extraction, we only let the client pre-extract features once only.
Thus, the local training is as efficient as training a \emph{linear} layer on extracted features.
In \cref{fig:Digits_alabel_acc_v_comp_eff_vary_compress_k}, our method outperforms the local training a lot using much fewer MACs for data matching.
Because all training is outsourced to the cloud, our method enables the end-to-end fine-tuning of the model resulting in better test accuracy.
Even if the LT trains longer with higher computation costs, the test accuracy of the ECOS with the least MACs is comparable to the best performance of LT at around $10^9$ MACs, where the ECOS only utilizes the 10\% of MACs by LT.

\begin{figure}[ht]
    \centering
    \includegraphics[width=0.5\textwidth]{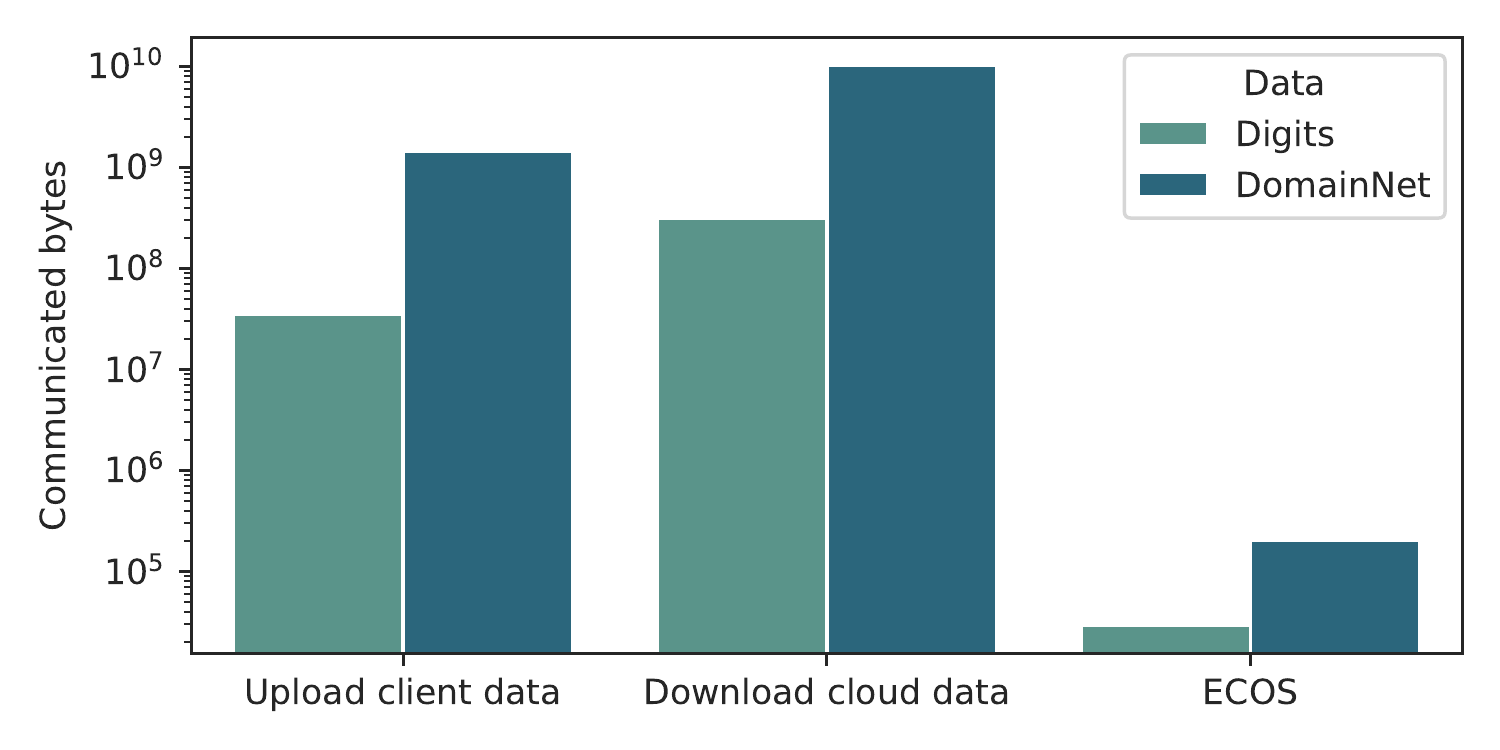}
    \caption{Evaluate the communication efficiency.}
    \label{fig:comm_eff}
\end{figure}

\textbf{Communication efficiency.}
We also compare the communication efficiency to the full cloud training (via uploading the whole client dataset) and fully client training (via downloading cloud dataset) in \cref{fig:comm_eff}.
For the ECOS, we let the size of the query set be $100$, which is the default configuration in our experiments.
Because the ECOS only communicates a few low-dimensional features (for example, 512-dimensional ResNet-extracted features for DomainNet and 72-dimensional HOG features for Digits), it costs much fewer bytes compared to traditional outsourcing by uploading the client data.
To be concrete, we also present the cost of downloading the cloud data and it is way more expensive than the rest two methods.

\end{document}